\title{Non-Canonical Hamiltonian Monte Carlo}
\author{} 
\author{ {\bf James A. Brofos} \\
Yale University \\
\And
{\bf Roy R. Lederman}  \\
Yale University \\
}
\begin{document}

\maketitle

\begin{abstract}
Hamiltonian Monte Carlo is typically based on the assumption of an underlying canonical symplectic structure. Numerical integrators designed for the canonical structure are incompatible with motion generated by non-canonical dynamics. These non-canonical dynamics, motivated by examples in physics and symplectic geometry, correspond to techniques such as  preconditioning which are routinely used to improve algorithmic performance. Indeed, recently, a special case of non-canonical structure, magnetic Hamiltonian Monte Carlo, was demonstrated to provide advantageous sampling properties. We present a framework for Hamiltonian Monte Carlo using non-canonical symplectic structures. Our experimental results demonstrate sampling advantages associated to Hamiltonian Monte Carlo with non-canonical structure.  To summarize our contributions: (i) we develop non-canonical HMC from foundations in symplectic geomtry; (ii) we construct an HMC procedure using implicit integration that satisfies the detailed balance; (iii) we propose to accelerate the sampling using an {\em approximate} explicit methodology; (iv) we study two novel, randomly-generated non-canonical structures: magnetic momentum and the coupled magnet structure, with implicit and explicit integration.
\end{abstract}

\blfootnote{Approved for Public Release; Distribution Unlimited. Public Release Case Number 20-1337. James Brofos's affiliation with The MITRE Corporation is provided for identification purposes only, and is not intended to convey or imply MITRE's concurrence with, or support for, the positions, opinions, or viewpoints expressed by the author. \copyright 2020 The MITRE Corporation and the authors. All rights reserved.}

\section{Introduction}

Bayesian inference provides a mechanism to capture uncertainties in complex statistical models but is complicated by intractable normalizing constants and multi-modal densities. A state-of-the-art method for generating samples from differentiable multi-dimensional distributions is Hamiltonian Monte Carlo (HMC) \citep{1206.1901,10.5555/922680}. HMC is able to leverage gradient information and Hamilton's equations of motion in order to propose distant candidate samples; contrast this with random walk Monte Carlo or Metropolis-adjusted Langevin dynamics whose proposal distribution is centered at the current state. The effect of this ability to propose far-away samples is a reduction in sample auto-correlation and higher effective sample sizes than competing methods.

\begin{figure}[t!]
    \centering
    \begin{subfigure}[b]{0.15\textwidth}
        \centering
        \includegraphics[width=\textwidth]{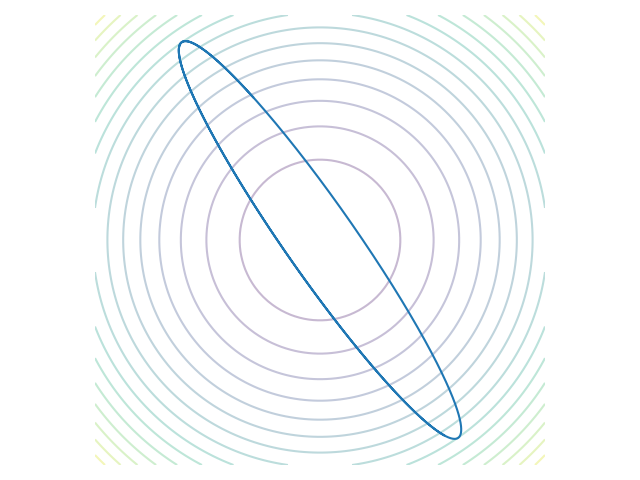}
        \caption{}
    \end{subfigure}
    ~
    \begin{subfigure}[b]{0.15\textwidth}
        \centering
        \includegraphics[width=\textwidth]{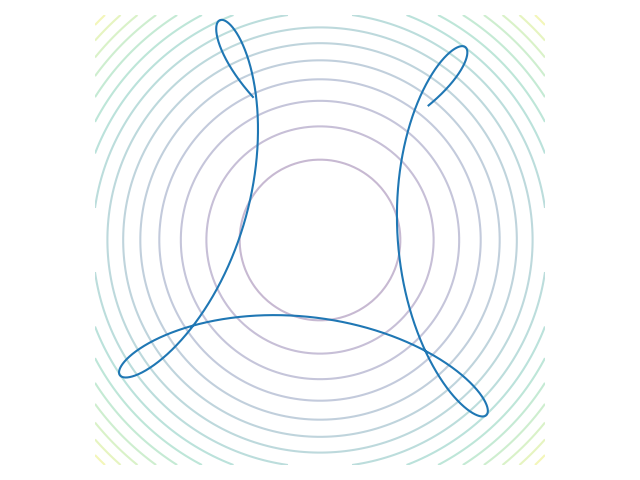}
        \caption{}
    \end{subfigure}
    ~
    \begin{subfigure}[b]{0.15\textwidth}
        \centering
        \includegraphics[width=\textwidth]{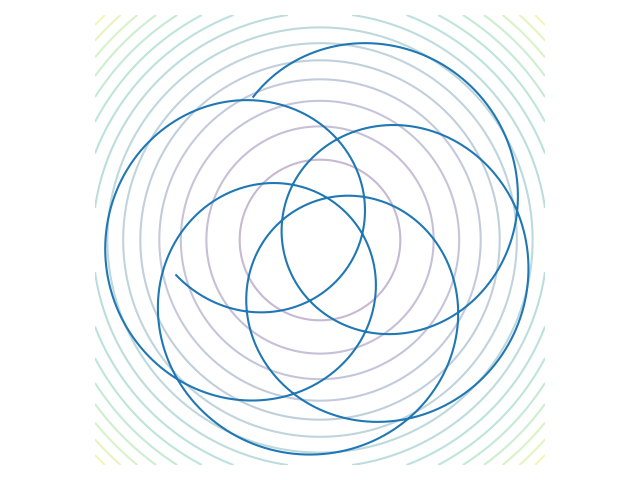}
        \caption{}
    \end{subfigure}
    \caption{Visualization of the position variable for (a) canonical dynamics, (b) magnetic dynamics, and (c) fully non-canonical dynamics. Particles evolve according to a Gaussian (quadratic) Hamiltonian, but are subject to distinct symplectic structures which influence particle trajectories.}
    \label{fig:non-canonical-visualization}
\end{figure}

In this work, we examine the geometric foundations of Hamiltonian dynamics and consider non-canonical dynamics. Recent work in \citep{pmlr-v70-tripuraneni17a} established a version of non-canonical dynamics which they called magnetic Hamiltonian Monte Carlo, and introduced an explicit integrator for non-canonical dynamics with a magnetic physical intuition. Yet non-canonical dynamics encompass a broader scope than motion under the influence of a magnetic field alone: {\it our current paper is motivated by the observation that non-canonical dynamics provide a rich (and mostly unexplored) class of inference procedures}. We expand upon prior work  using a an {\it approximate} explicit integrator for non-separable Hamiltonians on non-canonical symplectic vector spaces. Our principle theoretic tool is Darboux's Theorem, which permits a change-of-basis in which the non-canonical structure assumes the canonical form. 
{\it Although this explicit integrator yields a proposal operator that does not satisfy the exact detailed balance, we compare this approximation to the implicit integration methods (that does satisfy the detailed balance) and find that the explicit method is faithful to the posterior,} and computationally faster. 
We evaluate non-canonical HMC with implicit integration and our explicit approximation on a general class of non-canonical structure. Our experimental results demonstrate that non-canonical dynamics yield more efficient samples relative to competing HMC methods and that explicit integration can further accelerate the method.

The structure of this paper is as follows. In \cref{sec:preliminaries} we provide mathematical background on Hamiltonian mechanics in the context of HMC, including the construction of Hamiltonian mechanics from symplectic geometry. Additional background on HMC can be found in \cref{app:hamiltonian-monte-carlo}.
In \cref{sec:analytical-apparatus} we review known results on non-canonical Hamiltonian dynamics and Darboux's Theorem. \Cref{sec:implicit-methods-for-non-canonical-hmc} discusses implicit integration methods for non-canonical HMC including the fact that proposal operators built from implicit midpoint integration satisfies detailed balance. Additional details about this numerical integration can be found in \cref{app:numerical-integration}.
\Cref{sec:explicit-acceleration} discusses our procedure for accelerating the integration of non-canonical Hamiltonian dynamics using Darboux's Theorem and related tools from symplectic geometry. 
Additional technical details about the numerical procedures associated with Darboux's Theorem and the explicit numerical integration can be found in \cref{app:symplectic-gram-schmidt,app:explicit-integration-scheme,app:remarks-on-proposal-operator,app:binding-strength-parameter,app:implicit-reversibility}.
Our experimental results are shown in \cref{sec:experiments} on two benchmark inference tasks. We include detailed appendices that touch on many aspects of Hamiltonian Monte Carlo, non-canonical dynamics, and integration strategies that serve to supplement the core paper. \Cref{app:proof-non-canonical-dynamics-darboux,app:proof-time-reversal-poisson} present proofs of results used in the paper.

\section{Preliminaries}\label{sec:preliminaries}

In this section we describe the underlying geometric concepts for HMC. We start with notation and proceed to formalize Hamilton's equations of motion, describing their fundamental properties, their construction in symplectic vector spaces, and the accompanying Poisson structure. We subsequently discuss symplectic integration methods.

\subsection{Notation}

Let $\text{Id}$ represent the $n\times n$ identity matrix and $\mathbf{0}$ the $n\times n$ zero matrix. Let $\text{Skew}(n)$ denote the set of $n\times n$ skew-symmetric matrices and $\text{Symm}(n)$ the set of $n\times n$ symmetric matrices. We define the canonical symplectic matrix
\begin{align} \label{eq:jcanon}
    \mathbb{J}_\text{can} = \begin{bmatrix} \mathbf{0} & \text{Id} \\ -\text{Id} &\mathbf{0}\end{bmatrix} \in \text{Skew}(2n).
\end{align}
Given a vector space $V$ of dimension $n$ and basis $\set{e^1,\ldots, e^n}$ consider $z=\sum_{i=1}^n a_ie^i$. We will sometimes use the basis isomorphism to identify $z\equiv (a_1,\ldots, a_n)$. We will write $(\mathbf{x},\mathbf{y})$ to represent the concatenation of vectors $\mathbf{x}$ and $\mathbf{y}$, regarded as a row vector so that $(\mathbf{x},\mathbf{y})^\top$ is a column vector.

\subsection{Mathematical Background}

In our discussion, the vectors space will usually take the form $Z = V\times V^*$, 
where $V = \mathbb{R}^n$, $V^* = \mathbb{R}^n$ and $Z=\mathbb{R}^{2n}$.
\begin{definition}[Symplectic Structure]
A symplectic structure $\Omega : Z\times Z\to \R$ is the skew-symmetric bilinear operator.
\end{definition}
The matrix $\mathbb{J}_\Omega\in\text{Skew}(2n)$ associated with the symplectic structure $\Omega$ is a skew-symmetric matrix
such that $\Omega(u, v) = (\mathbb{J}_\Omega^\top u)(v)$. 
When $\mathbb{J}_\Omega^\top$ is an isomorphism of $Z$ and $Z^*$, we say the symplectic structure $\Omega$ is {\em non-degenerate}. All the symplectic forms in our discussion will be assumed to be non-degenerate.


\begin{definition}[Symplectic Vector Space]\label{def:symspace}
Let $Z$ be a vector space  equipped with a non-degenerate skew-symmetric bilinear form $\Omega$. Then $(Z, \Omega)$ is called a symplectic vector space.
\end{definition}

\subsection{Hamilton's equations: Canonical Separable Case}

We begin by reviewing the mathematical fundamentals of Hamiltonian mechanics in the context of HMC. Our presentation follows the discussion in \citep{marsden2002introduction}. Let $V$ be a vector space of dimension $\abs{V}=n$, for a positive integer $n$. Let $\set{e_1, \ldots, e_n}$ be a basis for $V$ and $\set{e^1,\ldots, e^n}$ a basis for the dual space $V^*$. Hence a vector $(u, v)\in V\times V^*$ may naturally be identified with a coordinate representation $(q^1,\ldots,q^n, p_1,\ldots,p_n)$ such that $u = \sum_{i=1}^n q^i e_i$ and $v = \sum_{i=1}^n p_i e^i$. A point $(\mathbf{q}, \mathbf{p}) = (q^1,\ldots, q^n, p_1,\ldots, p_n)$ is a point in the phase-space of the system. The variables $\mathbf{q}$ and $\mathbf{p}$ are called position and momentum, respectively.

Hamilton's canonical equations describe the motion of a particle in a mechanical system whose total energy is given by the Hamiltonian $H(\mathbf{q}, \mathbf{p})$; we will forthwith assume that $H$ is differentiable in both of its arguments. 

A Hamiltonian is called {\em separable} if it may be decomposed into the sum of terms, one dependent on the position variables alone and the other dependent on the momentum variables alone; mathematically, $H(\mathbf{q}, \mathbf{p}) = U(\mathbf{q}) + K(\mathbf{p})$. The function $U(\mathbf{q})$ is called the potential energy while $K(\mathbf{p})$ is the kinetic energy, which is often of quadratic form $K(\mathbf{p}) = \mathbf{p}^\top\mathbf{p} / 2$. 

The evolution in phase-space coordinates is given by
\begin{align} \label{eq:canonical-motion}
    \frac{\d{}}{\d{t}} \mathbf{q} = \nabla_\mathbf{p} H(\mathbf{q}, \mathbf{p}) ~~~~~~~ \frac{\d{}}{\d{t}} \mathbf{p} = -\nabla_\mathbf{q} H(\mathbf{q}, \mathbf{p}).
\end{align}
The canonical Hamiltonian dynamics exhibit three appealing properties:
\begin{enumerate}[i]
    \item The Hamiltonian is conserved in time-evolution $\frac{\d{}}{\d{t}} H(\mathbf{q}, \mathbf{p}) = 0$.
    \item Volume in phase-space is preserved in time-evolution such that if $R_0$ is a subset of phase-space and $R_t$ is its image under Hamilton's equation of motion at time $t>0$ then $\text{Vol}(R_0) = \text{Vol}(R_t)$.
    \item Hamiltonian dynamics are reversible in time and, if the Hamiltonian satisfies $H(\mathbf{q}, \mathbf{p}) = H(\mathbf{q}, -\mathbf{p})$, the reversal may be achieved by negating the sign of the momentum $\mathbf{p} \to -\mathbf{p}$.
\end{enumerate}
Remarkably, the first and second of these properties will continue to hold even for non-canonical Hamiltonian dynamics. The third property is more subtle: the dynamics are reversible by the flow property of differential equations; however, a more complicated reversal procedure than flipping the sign of the momentum will be necessary in order to realize the time reversal effect. Further details may be found in \cite{pmlr-v70-tripuraneni17a} and \cref{app:proof-time-reversal-poisson,app:implicit-reversibility}

Hamilton's equations may be elegantly considered by endowing $Z = V\times V^*$ with the {\em canonical symplectic structure} $\Omega_\text{can}$, defined by the relation  $\Omega_\text{can}(u, v) \defeq u^\top \mathbb{J}_\text{can} v$ for $u,v \in V\times V^*$, where $\mathbb{J}_\text{can}$ is defined in \cref{eq:jcanon}.

Let $z=(p,q)$ be coordinates in phase-space. 
Denote by $\mathbf{D}H$ the concatenation of the derivatives with respect to $p$ and $q$:
\begin{equation}\label{eq:d}
\mathbf{D}H = \paren{\nabla_\mathbf{q} H, \nabla_\mathbf{p} H}.
\end{equation}
In the mathematical formalism of the symplectic vector space, Hamilton's canonical equations of motion in Equation \cref{eq:canonical-motion} are expressible by the Hamiltonian vector field $X_H : Z\to Z$, defined by the formula
\begin{equation}\label{eq:hamiltonian-vector-field}
    X_H(z) = (-\mathbb{J}_\text{can})^{-1} ~\mathbf{D}H(\mathbf{q}, \mathbf{p}) = \mathbb{J}_\text{can} ~\mathbf{D}H(\mathbf{q}, \mathbf{p}),
\end{equation}
where, $\mathbf{D}$ is defined in \cref{eq:d}, and in the canonical case $\mathbb{J}_\Omega = \mathbb{J}_\text{can}$ defined in \cref{eq:jcanon}. Note that \cref{eq:hamiltonian-vector-field} represents a vector field since it is a (smooth) assignment of a vector $X_H(z)\in Z$ for each $z\in Z$. 

If $z\in Z$ evolves according to these dynamics, then we obtain
\begin{equation}
\frac{\d{}}{\d{t}} z = X_H(z),
\end{equation}
which is equivalent to \cref{eq:canonical-motion}.

\subsection{Hamilton's Equations: Non-Canonical Case}

More generally, the Hamiltonian vector field in \cref{eq:hamiltonian-vector-field}
on a symplectic vector space (see \cref{def:symspace}) is defined by the relation $\Omega(X_H(z), \delta) = \mathbf{D}H(z)\cdot \delta$ for $z,\delta\in Z$, with $\mathbf{D}$ defined in \cref{eq:d}. Since $\Omega$ is non-degenerate and its matrix is skew-symmetric, this relationship assumes the following matrix expression: $(X_H(z))^\top \mathbb{J}_\Omega \delta = \mathbf{D}H(z)^\top \delta$ or 
\begin{equation}\label{eq:general-hamiltonian-vector-field}
    X_H(z) = (-\mathbb{J}_\Omega)^{-1} ~\mathbf{D}H(z).
\end{equation}

The {\em Poisson matrix} $\mathbb{B}_\Omega$ associated with the symplectic matrix $\mathbb{J}_\Omega$ is the inverse of the transpose of the symplectic matrix $\mathbb{B}_\Omega \defeq (-\mathbb{J}_\Omega)^{-1}$ (where $\mathbb{J}_\Omega^\top = -\mathbb{J}_\Omega$ since $\mathbb{J}_\Omega$ is skew-symmetric). From the relation (\ref{eq:general-hamiltonian-vector-field}) we immediately have the equivalent statement 
\begin{equation}\label{eq:general-hamiltonian-vector-field-B}
X_H(z) = \mathbb{B}_\Omega ~\mathbf{D}H(z).     
\end{equation}
Note that because $\mathbb{J}_\Omega$ is skew-symmetric so is $\mathbb{B}_\Omega$. For our purposes, it will be convenient to write $\mathbb{B}_\Omega$ as a block of four matrices like so
\begin{align}\label{eq:poisson-matrix}
    \mathbb{B}_\Omega = \begin{bmatrix} \mathbf{E} & \mathbf{A} \\ -\mathbf{A}^\top & \mathbf{G} \end{bmatrix}
\end{align}
where $\mathbf{E}, \mathbf{G}\in \text{Skew}(n)$, where we have suppressed the dependence of $\mathbf{E},\mathbf{A},\mathbf{G}$ on $\Omega$ for notational brevity. 

In a system with a separable Hamiltonian comprised of a potential and kinetic energy, the non-canonical Poisson matrix $\mathbb{B}_\Omega$ enables gradients of the potential energy to flow into the time-derivative of the state variable, while gradients of the kinetic energy may similarly flow into the time-derivative of the momenta. Just as $\Omega$ denotes the symplectic structure whose matrix is $\mathbb{J}_\Omega$ we will use $\Lambda_\Omega : Z\times Z\to \R$ to represent the {\em Poisson structure} whose matrix is $\mathbb{B}_\Omega$: $\Lambda_\Omega(u, v) = u^\top \mathbb{B}_\Omega v$ for $u,v \in Z$.

\section{Analytical Apparatus}\label{sec:analytical-apparatus}

This section pertains to theoretical and practical considerations for non-canonical Hamiltonian dynamics. 

\subsection{Reversibility of Non-Canonical Hamiltonian Dynamics}

For now, we will recall some facts about non-canonical Hamiltonian dynamics that apply for an arbitrary Poisson structure.  In canonical Hamiltonian dynamics, time reversal can be achieved by reversing the momentum as in property (iii); in the non-canonical case, a more involved procedure is required. 
The following theorem from \citep{pmlr-v70-tripuraneni17a} gives the analogue of time-reversibility for non-canonical dynamics.
\begin{theorem}\label{thm:time-reversal-poisson}
Consider a Poisson structure $\Lambda_\Omega$ with matrix $\mathbb{B}_\Omega$ defined in \cref{eq:poisson-matrix} and let $X_H(z) = \mathbb{B}_\Omega ~\mathbf{D}H(\mathbf{q}, \mathbf{p})$ be the corresponding Hamiltonian vector field. Unlike canonical Hamiltonian dynamics, reversing the sign of the momentum variable is not sufficient to reverse the direction of time. However, the dynamics with augmented Poisson structure $\tilde{\Lambda}_\Omega$ whose matrix is
\begin{align}\label{eq:brev}
    \tilde{\mathbb{B}}_\Omega = \begin{bmatrix} -\mathbf{E} & \mathbf{A} \\ -\mathbf{A}^\top & -\mathbf{G} \end{bmatrix}
\end{align}
will have the time-reversal effect when the initial condition is $(\mathbf{q}, -\mathbf{p})$. We call $\tilde{\Lambda}_\Omega$ the time-reversal Poisson structure.
\end{theorem}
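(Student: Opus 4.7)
The plan is to exhibit the time-reversed trajectory explicitly and verify that it satisfies the modified ODE, then invoke uniqueness. Let $z(t) = (\mathbf{q}(t), \mathbf{p}(t))$ solve the original system $\dot z = \mathbb{B}_\Omega\, \mathbf{D}H(z)$ with $z(0) = (\mathbf{q}_0, \mathbf{p}_0)$, and define the candidate reversed trajectory $\tilde z(t) \defeq (\mathbf{q}(-t), -\mathbf{p}(-t))$. I would aim to show that $\tilde z$ satisfies $\dot{\tilde z} = \tilde{\mathbb{B}}_\Omega\, \mathbf{D}H(\tilde z)$ with initial condition $\tilde z(0) = (\mathbf{q}_0, -\mathbf{p}_0)$. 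Picard--Lindel\"of uniqueness then identifies $\tilde z$ with the flow of the modified Poisson structure from $(\mathbf{q}_0, -\mathbf{p}_0)$, which is exactly the time-reversal statement of the theorem.

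The core of the argument is a block-by-block matching. Writing the forward equations as $\dot{\mathbf{q}} = \mathbf{E} \nabla_\mathbf{q} H + \mathbf{A} \nabla_\mathbf{p} H$ and $\dot{\mathbf{p}} = -\mathbf{A}^\top \nabla_\mathbf{q} H + \mathbf{G} \nabla_\mathbf{p} H$ and applying the chain rule to $\tilde z$, one obtains $\dot{\tilde{\mathbf{q}}}(t) = -\dot{\mathbf{q}}(-t)$ and $\dot{\tilde{\mathbf{p}}}(t) = \dot{\mathbf{p}}(-t)$. Under the momentum-parity assumption $H(\mathbf{q}, -\mathbf{p}) = H(\mathbf{q}, \mathbf{p})$ (the same hypothesis already used in property (iii) of the canonical case), $\nabla_\mathbf{q} H$ is even in $\mathbf{p}$ while $\nabla_\mathbf{p} H$ is odd, so $\mathbf{D}H(\tilde z(t))$ differs from $\mathbf{D}H(z(-t))$ only by a sign flip on the $\nabla_\mathbf{p} H$ block. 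Substituting into $\tilde{\mathbb{B}}_\Omega\, \mathbf{D}H(\tilde z(t))$, the prescribed sign flips on $\mathbf{E}$ and $\mathbf{G}$ combine with the parity-induced sign flip of $\nabla_\mathbf{p} H$ so that the off-diagonal $\mathbf{A}$ blocks are unchanged while the diagonal blocks pick up an overall minus sign; these signs are exactly what is needed to match $-\dot{\mathbf{q}}(-t)$ and $\dot{\mathbf{p}}(-t)$ respectively.

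The main subtlety is the role of the momentum-parity of $H$: it is precisely this hypothesis that forces $\mathbf{E}$ and $\mathbf{G}$ (but not $\mathbf{A}$) to be negated in $\tilde{\mathbb{B}}_\Omega$, because the parity-induced sign flip of $\nabla_\mathbf{p} H$ cancels the opposing sign in the off-diagonal blocks while reinforcing it on the diagonal. Without momentum parity, no modification of $\mathbb{B}_\Omega$ of the form considered could achieve reversal. The first clause of the theorem---that naive momentum flipping alone is insufficient---drops out of the same bookkeeping: keeping $\tilde{\mathbb{B}}_\Omega = \mathbb{B}_\Omega$ but flipping only the initial momentum leaves residual terms proportional to $\mathbf{E} \nabla_\mathbf{q} H$ and $\mathbf{G} \nabla_\mathbf{p} H$, which do not vanish unless $\mathbf{E}$ and $\mathbf{G}$ are both zero (the canonical case).
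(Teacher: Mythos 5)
Your proposal is correct and is essentially the same argument as the paper's (given in its appendix, where it is stated for the more general state-dependent Poisson structure): both rely on the momentum-parity hypothesis $H(\mathbf{q},\mathbf{p})=H(\mathbf{q},-\mathbf{p})$ to make $\nabla_\mathbf{q}H$ even and $\nabla_\mathbf{p}H$ odd in $\mathbf{p}$, and both do the same block-by-block sign bookkeeping to verify that $(\mathbf{q}(-t),-\mathbf{p}(-t))$ solves the dynamics generated by $\tilde{\mathbb{B}}_\Omega$. Your explicit definition of the reversed trajectory and appeal to uniqueness is just a cleaner packaging of the paper's ``identify the negative time derivative with the derivative in negative time'' step.
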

The proof of the Theorem can be found in \citep{pmlr-v70-tripuraneni17a}. 
We give a generalization of this theorem to the case of state-dependent Poisson structure in \cref{app:proof-time-reversal-poisson}, although in our experiments and theoretical treatment only the case of a constant Poisson matrix is considered.

\subsection{Magnetic Hamiltonian Monte Carlo}

Most relevant to our research is magnetic HMC \citep{pmlr-v70-tripuraneni17a}. Magnetic HMC considers a non-canonical Poisson $\Lambda_\text{mag}$ structure whose matrix in coordinates assumes the form
\begin{align}
    \mathbb{B}_\text{mag} = \begin{bmatrix} \mathbf{0} & \mathbf{A} \\ -\mathbf{A}^\top & \mathbf{G} \end{bmatrix} \in \text{Skew}(2n).
\end{align}
Recalling the general Poisson structure in \cref{eq:poisson-matrix}, this formulation corresponds to $\mathbf{E}=\mathbf{0}$. This form is motivated by a physical intuition in the special case of $n=3$, wherein this Poisson structure describes the motion of a particle under the influence of a magnetic field. We refer to this non-canonical structure as a magnetic position structure. For this special case of non-canonical Poisson structure, there exists an explicit leapfrog integrator that can be used as a transition operator. One of our contributions in this work is to develop an explicit integrator for the case of $\mathbf{E}\neq\mathbf{0}$.

\subsection{Reduction to Canonical Form via Darboux's Theorem}

Our purpose in \cref{sec:explicit-acceleration} is to illustrate how to design an explicit, symplectic integration strategy for Hamiltonian dynamics in the setting of a non-canonical Poisson structure. Our first step will be to use a basis transform such that, in the new basis, the matrix of the symplectic structure is canonical. The existence of such a basis is a consequence of Darboux's Theorem.

\begin{theorem}[Darboux's Theorem for a Symplectic Vector Space]\label{thm:darboux-theorem}
If $V$ is a symplectic vector space with non-degenerate symplectic form $\Omega$, then there exists a basis $\mathbf{B}$ and change-of-basis matrix $\mathbf{F} \defeq \mathbf{B}^{-1} : V\times V^*\to V\times V^*$ such that, in coordinates $\tilde{z} \defeq (\tilde{\mathbf{q}},\tilde{\mathbf{p}}) =  (\tilde{q}^1,\ldots, \tilde{q}^n, \tilde{p}_1, \ldots, \tilde{p}_n)$ of the new basis, the symplectic structure $\Omega$ is canonical.
\end{theorem}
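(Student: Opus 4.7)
My plan is a symplectic Gram--Schmidt construction, carried out by induction on $\dim Z = 2n$. In the base case $n=0$ there is nothing to check. For the inductive step, I would pick any nonzero $e_1 \in Z$; by non-degeneracy of $\Omega$ there is some $v$ with $\Omega(e_1, v) \neq 0$, and after rescaling we obtain $f_1$ with $\Omega(e_1, f_1) = 1$. Let $W = \mathrm{span}(e_1, f_1)$, which is necessarily two-dimensional (otherwise $\Omega$ would vanish on $W$, contradicting $\Omega(e_1,f_1)=1$).

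Define the symplectic complement
\[
W^{\Omega} \;=\; \{\, v \in Z : \Omega(v, e_1) = 0 = \Omega(v, f_1)\,\}.
\]
The structural key is the direct-sum decomposition $Z = W \oplus W^{\Omega}$. To prove it, for each $v \in Z$ set
\[
v_W \;=\; \Omega(v, f_1)\, e_1 \;-\; \Omega(v, e_1)\, f_1 \;\in\; W,
\]
and a short calculation using $\Omega(e_1, f_1) = 1$ and skew-symmetry verifies $\Omega(v - v_W, e_1) = 0$ and $\Omega(v - v_W, f_1) = 0$, so $v - v_W \in W^\Omega$. Intersection triviality $W \cap W^\Omega = \{0\}$ follows by substituting $w = a e_1 + b f_1$ into the defining conditions of $W^\Omega$ and reading off $a = b = 0$.

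Next I would show that $\Omega|_{W^\Omega}$ is itself non-degenerate, which is what enables the induction to step down in dimension. If $v \in W^\Omega$ were $\Omega$-orthogonal to all of $W^\Omega$, it would also be $\Omega$-orthogonal to $W$ by construction, hence to all of $Z = W \oplus W^\Omega$; non-degeneracy of $\Omega$ on $Z$ then forces $v = 0$. Applying the inductive hypothesis to $(W^\Omega, \Omega|_{W^\Omega})$ produces vectors $e_2,\ldots,e_n, f_2,\ldots,f_n$ in which the restricted form is canonical, and concatenation yields the full basis $\mathbf{B} = \{e_1,\ldots,e_n, f_1,\ldots,f_n\}$. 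By construction the Gram relations $\Omega(e_i,e_j) = \Omega(f_i,f_j) = 0$ and $\Omega(e_i,f_j) = \delta_{ij}$ hold, so the matrix of $\Omega$ in this ordered basis is precisely $\mathbb{J}_\text{can}$ from \cref{eq:jcanon}. Taking $\mathbf{F} \defeq \mathbf{B}^{-1}$ supplies the change-of-basis map into coordinates $(\tilde{\mathbf{q}}, \tilde{\mathbf{p}})$.

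I expect the main obstacle to be the direct-sum decomposition: it relies on having chosen $f_1$ so that $\Omega(e_1,f_1) = 1$ to make the explicit projection formula for $v_W$ well-posed, and then on carefully propagating non-degeneracy from $Z$ down to $W^\Omega$ so that the induction closes cleanly. An incidental byproduct of the argument is that $\dim Z$ must be even, which is consistent with the ambient setup $Z = V \times V^*$ in the paper.
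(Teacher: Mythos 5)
Your inductive symplectic Gram--Schmidt argument is correct and is essentially the same construction the paper uses: its Algorithm in \cref{app:symplectic-gram-schmidt} builds the basis pair by pair, projecting new vectors into the symplectic complement of the span constructed so far and normalizing so that $\Omega(v',w')=1$, which is exactly your inductive step (the paper merely chooses the new pair randomly and splits the normalization $\sqrt{|O|}$ between the two vectors, whereas you invoke non-degeneracy to guarantee a valid $f_1$ deterministically). No gaps; your explicit projection $v_W=\Omega(v,f_1)e_1-\Omega(v,e_1)f_1$ and the final reordering into $(e_1,\ldots,e_n,f_1,\ldots,f_n)$ correctly reproduce $\mathbb{J}_{\text{can}}$ under the paper's convention $\Omega(u,v)=u^\top\mathbb{J}_\Omega v$.
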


\section{Implicit Methods for Non-Canonical Hamiltonian Monte Carlo}\label{sec:implicit-methods-for-non-canonical-hmc}

\begin{algorithm}[ht!]
\caption{The procedure for implementing non-canonical Hamiltonian Monte Carlo using an explicit integrator. It is necessary to perform an intricate reversal of symplectic structure and momentum in order to maintain symmetry of the proposal.}
\label{alg:non-canonical-hmc}
\begin{algorithmic}[1]
\State \textbf{Parameters}: Bonding strength $\omega >0$ and step-size $\epsilon > 0$. Number of integration steps $N$. Decision to use implicit or explicit integration.
\State \textbf{Input}: Starting position variable $\mathbf{q}$; non-canonical symplectic structure $\Omega$ with matrix $\mathbb{J}$ in non-canonical coordinates. Separable Hamiltonian $H(z) \equiv H(\mathbf{q}, \mathbf{p})$.
\State \textbf{Output}: Samples from a Markov chain targeting $p(\mathbf{q}, \mathbf{p}) \propto \exp(-H(\mathbf{q}, \mathbf{p}))$.
\State Compute Poisson matrix $\mathbb{B} = -\mathbb{J}^{-1}$, the time reversal Poisson structure $\tilde{\mathbb{B}}$ and time-reversal symplectic structure with matrix $\tilde{\mathbb{J}} = -\tilde{\mathbb{B}}^{-1}$ from \cref{thm:time-reversal-poisson}.
\State Use symplectic Gram-Schmidt (see \cref{app:symplectic-gram-schmidt}) to find base $\mathbf{B}$ and its time-reversal $\tilde{\mathbf{B}}$ (\cref{eq:brev}) in which the symplectic structure and time-reversal symplectic structure, respectively, have canonical form. Set $\mathbf{F} = \mathbf{B}^{-1}$.
\While{Not done sampling}
    \State Sample $\mathbf{p}\sim\mathcal{N}(\mathbf{0}, \text{Id})\in \R^n$.
    \If{Implicit integration}
        \State Integrate the dynamics corresponding to $H(\mathbf{q},\mathbf{p})$ with the non-canonical Poisson structure $\mathbb{B}$ using the implicit midpoint integrator; see \cref{app:numerical-integration}. {\it Use this method for an exact, but slower, MCMC sampler.}
    \ElsIf{Explicit integration}
        \State  Integrate the dynamics corresponding to (possibly non-separable) Hamiltonian $\tilde{H}(\tilde{\mathbf{q}}, \tilde{\mathbf{p}}) = H(\mathbf{B}(\tilde{\mathbf{q}}, \tilde{\mathbf{p}})^\top)$ for $N$ steps using the explicit integrator (see \cref{sec:explicit-acceleration,app:explicit-integration-scheme}) with step-size $\epsilon$. This computes candidate $(\tilde{\mathbf{q}}',\tilde{\mathbf{p}}')$ in canonical coordinates. Convert candidate point in canonical coordinates back to non-canonical coordinates $(\mathbf{q}', \mathbf{p}')^\top = \mathbf{B}(\tilde{\mathbf{q}}',\tilde{\mathbf{p}}')^\top$. {\it Use this method for an approximate, but faster, MCMC sampler.}
    \EndIf
    \State Compute Metropolis-Hastings correction $M = \min(0, H(\mathbf{q}, \mathbf{p}) - H(\mathbf{q}', \mathbf{p}'))$ and sample $u\sim\text{Uniform}(0, 1)$. 
    \If{$\log u < M$}
        \State Set $(\mathbf{q}, \mathbf{p}) = (\mathbf{q}', \mathbf{p}')$.
    \Else
        \State \textbf{Optional}: Set $\mathbf{F} = \mathbf{B}^{-1}$ if $\mathbf{F} = \tilde{\mathbf{B}}^{-1}$; otherwise set $\mathbf{F} = \tilde{\mathbf{B}}^{-1}$. Set $\epsilon=-\epsilon$.
    \EndIf
    \State \textbf{Yield}: $(\mathbf{q}, \mathbf{p})$.
\EndWhile
\end{algorithmic}
\end{algorithm}

Our procedure for non-canonical Hamiltonian Monte Carlo differs only in two respects from the classic HMC discussed in \cref{app:hamiltonian-monte-carlo}. First, we generate transitions according to equations of motion given by a non-canonical Poisson matrix $\mathbb{B}_\Omega$ corresponding to a non-canonical symplectic structure $\Omega$ and, second, our method of integrating these equations of motion uses the implicit midpoint integrator rather than the leapfrog procedure; this yields a proposal for which a Metropolis-Hastings accept-reject decision is applied, giving a reversible proposal operator. 

Having stated that non-canonical HMC with implicit integration is correct Monte Carlo, we now establish detailed balance for non-canonical HMC with implicit integration.

\begin{theorem}\label{thm:non-canonical-detailed-balance}
Let $\Lambda_\Omega$ be a Poisson structure corresponding to the symplectic structure $\Omega$ with time-reversal Poisson structure $\tilde{\Lambda}_\Omega$. Suppose $H(\mathbf{q},\mathbf{p}) = U(\mathbf{q}) + \frac{1}{2} \mathbf{p}^\top\mathbf{p}$ is a separable Hamiltonian. The implicit midpoint integrator satisfies detailed balance with respect to the Gibbs distribution proportional to $\exp\paren{-H(\mathbf{q},\mathbf{p})}$ when combined with a momentum-flip operator and a transition to the time-reversal Poisson structure as described in \cref{thm:time-reversal-poisson}.
\end{theorem}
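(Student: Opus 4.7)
My plan is to establish detailed balance by checking the two standard ingredients for a Metropolis--Hastings kernel built from a deterministic proposal: (a) volume preservation of the proposal map, so that the Jacobian factor in the accept/reject ratio is trivial; and (b) an involution identity so that the proposal density is symmetric once the momentum flip and the switch to the time-reversal Poisson structure are accounted for. Once (a) and (b) are in place, the usual calculation yields $\pi(z)\,T(z,z') = \pi(z')\,T(z',z)$ with acceptance probability $\min\{1,\exp(H(z)-H(z'))\}$, which is exactly the accept/reject ratio in \cref{alg:non-canonical-hmc}.

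For (a), I would verify symplecticity of one implicit midpoint step $\Phi_h^{\mathbb{B}}$ defined by $z_{n+1} = z_n + h\,\mathbb{B}_\Omega\,\mathbf{D}H((z_n + z_{n+1})/2)$. Linearizing at a midpoint $z_m$ with $M = \tfrac{h}{2}\,\mathbb{B}_\Omega\,\mathbf{D}^2 H(z_m)$ gives the Cayley-type differential $A = (\text{Id}-M)^{-1}(\text{Id}+M)$. Because $\mathbb{J}_\Omega M = -\tfrac{h}{2}\,\mathbf{D}^2 H(z_m)$ is symmetric, a short block calculation yields $A^\top \mathbb{J}_\Omega A = \mathbb{J}_\Omega$, so $\Phi_h^{\mathbb{B}}$ preserves $\Omega$ and, a fortiori, phase-space volume. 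The momentum flip $S:(\mathbf{q},\mathbf{p}) \mapsto (\mathbf{q},-\mathbf{p})$ is trivially volume preserving.

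For (b), let $\Psi_h^{\mathbb{B}} = S \circ \Phi_h^{\mathbb{B}}$ and define the companion map $\Psi_h^{\tilde{\mathbb{B}}} = S \circ \Phi_h^{\tilde{\mathbb{B}}}$ using the time-reversal Poisson matrix of \cref{thm:time-reversal-poisson}. The identity to prove is $\Psi_h^{\tilde{\mathbb{B}}} \circ \Psi_h^{\mathbb{B}} = \mathrm{id}$. Writing out the midpoint equations with the block forms in \cref{eq:poisson-matrix} and \cref{eq:brev}, and using separability, which forces $\mathbf{D}H(\mathbf{q},-\mathbf{p}) = (\nabla U(\mathbf{q}),-\mathbf{p})$, one checks that the sign flips in the $\mathbf{E}$ and $\mathbf{G}$ blocks of $\tilde{\mathbb{B}}_\Omega$ combine with the negated momentum coordinate to reproduce exactly the forward midpoint equations solved for $(\mathbf{q}_n,\mathbf{p}_n)$ in terms of $(\mathbf{q}_{n+1},\mathbf{p}_{n+1})$. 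Iterating this over the $N$ integration steps of \cref{alg:non-canonical-hmc} yields the involution identity for the full proposal.

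Assembling (a) and (b) into detailed balance is then a standard Metropolis--Hastings argument, most cleanly stated on the enlarged state $(z,\sigma) \in Z \times \{\mathbb{B}_\Omega,\tilde{\mathbb{B}}_\Omega\}$ in which $\sigma$ records which Poisson matrix governs the current step: the involution swaps $\sigma$, the unit Jacobian collapses the MH ratio to $\min\{1,\exp(H-H')\}$, and the target factorizes as $\pi(z,\sigma) \propto e^{-H(z)}$. The main obstacle I expect is this bookkeeping around the Poisson-structure swap: unlike canonical HMC, the reverse of a proposal built from $\mathbb{B}_\Omega$ is not the same dynamical law but one with $\tilde{\mathbb{B}}_\Omega$, so stating detailed balance cleanly requires either the enlarged state above or an equal-probability randomization over the two structures. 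The algebra in (b) is elementary, but it closes only because the sign pattern in $\tilde{\mathbb{B}}_\Omega$ exactly cancels the flipped momentum coordinate under separability, so that step is where I would be most careful.
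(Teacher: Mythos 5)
Your proposal is correct and follows essentially the same route as the paper's proof in \cref{app:implicit-reversibility}: volume preservation from symplecticity of the implicit midpoint step (which the paper cites as a known result rather than re-deriving via the Cayley transform), a direct verification on the midpoint equations that the momentum flip combined with the switch to $\tilde{\mathbb{B}}_\Omega$ inverts the forward step under separability, and the standard Metropolis--Hastings assembly with the structure-swap bookkeeping (your enlarged state $(z,\sigma)$ is the paper's equivalent randomization over $\pm\epsilon^*$).
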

A proof is given in \cref{app:implicit-reversibility}. Detailed balance quickly follows from the fact that the implicit midpoint integrator is reversible and symplectic (so that it is volume preserving).

\section{Acceleration via Explicit Integration}\label{sec:explicit-acceleration}

HMC with transitions computed from implicit integration leaves the stationary distribution invariant. However, being implicit, we expect HMC based on these transitions to be relatively slow compared to explicit integrators. {\it This observation motivates our interest in developing explicit integrators that can be utilized in place of implicit integration.} The following approach achieves this, but at the cost of provable detailed balance. 

We propose an {\it approximate} method for non-canonical HMC via an explicit integrator for the same dynamics. Explicit integration is achieved by transforming the non-canonical equations of motion into a basis in which the symplectic structure appears canonical, integrating the canonical equations of motion in this basis, and performing a change-of-basis back to non-canonical coordinates at the end of the trajectory. 

\subsection{Designing an Integrator: Implementing the Reduction to Canonical Form}

Darboux's Theorem (Theorem \ref{thm:darboux-theorem}) guarantees the existence of a basis in which the non-canonical symplectic structure assumes the canonical form, but how will one find the change-of-basis matrix $\mathbf{F}$? An explicit procedure, known as symplectic Gram-Schmidt is given in \cref{app:symplectic-gram-schmidt}, based off the procedure in \citep{gole2001symplectic}. The technique is to identify pairs of vectors defining 2-dimensional subspaces whereupon the symplectic structure assumes the canonical form; by building up collections of orthogonal symplectic planes, one identifies a basis of the whole linear space for which the symplectic structure becomes canonical. Note that, unlike traditional Gram-Schmidt from which resulting vectors must have unit norm, the basis produced by symplectic Gram-Schmidt can have basis vectors whose norm exhibit large dynamic range. In our experiments we prefer the symplectic basis whose Frobenius norm is smallest. In practice, one may run symplectic Gram-Schmidt multiple times with distinct random seeds in order to identify a symplectic basis with small norm.

The symplectic Gram-Schmidt procedure may not always be necessary, however, and usable symplectic bases may be discovered by inspection of the symplectic structure. \Cref{tab:symplectic-bases} shows non-canonical structures and how to construct a basis in which the symplectic structure assumes canonical form. In the new basis, the Hamiltonian $H(\mathbf{q},\mathbf{p})$ is naturally reformed as $\tilde{H}(\tilde{\mathbf{q}},\tilde{\mathbf{p}}) \defeq H(\mathbf{B}(\tilde{\mathbf{q}},\tilde{\mathbf{p}})^\top)$.

One wonders if this procedure of changing bases results in different motion, in some sense. If the Hamiltonian is modified appropriately under the change-of-basis, the answer is no. The following theorem establishes the equivalency of the dynamics with non-canonical Poisson structure $\Lambda_\Omega$ under Hamiltonian $H(\mathbf{q},\mathbf{p})$ and the dynamics in canonical coordinates with Hamiltonian $\tilde{H}(\tilde{\mathbf{q}},\tilde{\mathbf{p}})$.

\begin{table*}[ht!]
\centering
\begin{tabular}{lccl}
                                                                                        & Poisson form $\mathbb{B}_\Omega$                                                           & Change of Basis $\mathbf{B}$                                                                                                                                                & Notes                                                                                                                           \\ \hline
\multicolumn{1}{l|}{Mass Preconditioning}                                               & $\begin{bmatrix} \mathbf{0} & \mathbf{M} \\ -\mathbf{M}^\top &\mathbf{0}\end{bmatrix}$ & $\begin{bmatrix}\mathbf{L} &  \mathbf{0} \\ \mathbf{0} & \mathbf{L}\end{bmatrix}$                                                                                           & $\mathbf{M} \succ \mathbf{0}$ and $\mathbf{L} = \sqrt{\mathbf{M}}$ \\
\multicolumn{1}{l|}{Magnetic Position}                                                           & $\begin{bmatrix} \mathbf{0} & \text{Id} \\ -\text{Id} & \mathbf{G}\end{bmatrix}$       & $\begin{bmatrix} \text{Id} & \mathbf{0} \\ \mathbf{A} & \text{Id} \end{bmatrix}$                                                                                            & $\mathbf{A} = \mathbf{G} / 2$                                                                                                   \\
\multicolumn{1}{l|}{\begin{tabular}[c]{@{}l@{}}Magnetic Position\\ Preconditioning\end{tabular}} & $\begin{bmatrix}\mathbf{0} & \mathbf{M} \\ -\mathbf{M}^\top & \mathbf{G}\end{bmatrix}$ & $ \begin{bmatrix} \mathbf{L} & \mathbf{0} \\ \mathbf{0} & \mathbf{L}\end{bmatrix}\begin{bmatrix} \text{Id} & \mathbf{0} \\ \mathbf{Q} & \text{Id} \end{bmatrix}$ & $\mathbf{Q} = \frac{(\mathbf{L}^{-1})^\top \mathbf{G} \mathbf{L}^{-1}}{2}$. \\ \hline\hline                
\end{tabular}
\caption{Symplectic Gram-Schmidt gives a procedure for finding a symplectic basis given any non-degenerate symplectic form. This procedure is not always necessary, however. One can sometimes find a symplectic basis by inspection of the symplectic form and its inverse transformation $\mathbf{F}$ may be forthrightly derived also. In this table, we give several examples. Recall the change-of-basis have the effect that $\mathbf{B}^\top\mathbb{J}\mathbf{B} = \mathbb{J}_\text{can}$.}
\label{tab:symplectic-bases}
\end{table*}

\begin{theorem}\label{thm:non-canonical-dynamics-darboux}
Let $\Lambda_\Omega$ be a (possibly non-canonical) Poisson structure corresponding to the symplectic structure $\Omega$. Let $\mathbf{F}$ be the change-of-basis matrix for which $\Omega$ assumes the canonical form. Then the canonical dynamics given by Hamilton's equations of motion with Hamiltonian $\tilde{H}(\tilde{\mathbf{q}},\tilde{\mathbf{p}})$ are equivalent to non-canonical Hamiltonian dynamics with Poisson structure $\Lambda_\Omega$. Moreover, $\mathbb{B}_\Omega = \mathbf{B}\mathbb{J}_\text{can}\mathbf{B}^\top$ and $\frac{\mathrm{d}}{\mathrm{d}t}(\tilde{\mathbf{q}},\tilde{\mathbf{p}})^\top = \mathbb{J}_\text{can}\mathbf{B}^\top\nabla_z H(\mathbf{B}(\tilde{\mathbf{q}},\tilde{\mathbf{p}})^\top)$.
\end{theorem}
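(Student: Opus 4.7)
The plan is to carry out a straightforward change-of-variables calculation, split cleanly into (a) deriving the claimed factorization of $\mathbb{B}_\Omega$, (b) computing the velocity in the new coordinates, and (c) checking it agrees with canonical Hamilton's equations for $\tilde H$.

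First, I would exploit how the matrix of a bilinear form transforms under a change-of-basis. Since $\Omega$ is a bilinear form and coordinates transform as $z = \mathbf{B}\tilde z$, the matrix of $\Omega$ transforms covariantly, and Darboux's Theorem (\cref{thm:darboux-theorem}) asserts the new matrix is canonical, i.e.\ $\mathbf{B}^\top \mathbb{J}_\Omega \mathbf{B} = \mathbb{J}_\text{can}$ (this convention is stated in the caption of \cref{tab:symplectic-bases}). Solving for $\mathbb{J}_\Omega$, inverting, and using the identity $\mathbb{J}_\text{can}^{-1} = -\mathbb{J}_\text{can}$ (since $\mathbb{J}_\text{can}^\top = -\mathbb{J}_\text{can}$ and $\mathbb{J}_\text{can}^\top \mathbb{J}_\text{can} = \text{Id}$) gives $\mathbb{B}_\Omega = -\mathbb{J}_\Omega^{-1} = -\mathbf{B}\,\mathbb{J}_\text{can}^{-1}\,\mathbf{B}^\top = \mathbf{B}\,\mathbb{J}_\text{can}\,\mathbf{B}^\top$, which is the first claim.

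Next, I would compute the equation of motion in the new coordinates. Since $\tilde z = \mathbf{F}z$ with $\mathbf{F} = \mathbf{B}^{-1}$, differentiating and substituting \cref{eq:general-hamiltonian-vector-field-B} yields $\frac{\mathrm{d}}{\mathrm{d}t}\tilde z = \mathbf{F}\,X_H(z) = \mathbf{F}\,\mathbb{B}_\Omega\,\mathbf{D}H(z)$. Plugging in the just-derived factorization and cancelling $\mathbf{F}\mathbf{B} = \text{Id}$ produces $\frac{\mathrm{d}}{\mathrm{d}t}\tilde z = \mathbb{J}_\text{can}\,\mathbf{B}^\top\,\nabla_z H(\mathbf{B}\tilde z)$, which is precisely the second claim.

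Finally, to establish equivalence with canonical dynamics generated by $\tilde H(\tilde z) \defeq H(\mathbf{B}\tilde z)$, I would apply the chain rule to obtain $\nabla_{\tilde z}\tilde H(\tilde z) = \mathbf{B}^\top \nabla_z H(\mathbf{B}\tilde z)$, so that $\frac{\mathrm{d}}{\mathrm{d}t}\tilde z = \mathbb{J}_\text{can}\,\nabla_{\tilde z}\tilde H(\tilde z)$. This is exactly the canonical Hamiltonian vector field for $\tilde H$, so integral curves of the non-canonical system with Hamiltonian $H$ map under $\mathbf{F}$ to integral curves of the canonical system with Hamiltonian $\tilde H$, proving equivalence.

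The main obstacle is not mathematical difficulty but bookkeeping: one must keep covariant transformations of the bilinear form $\mathbb{J}_\Omega$ carefully distinct from contravariant transformations of vectors and gradients, and remember to use $\mathbb{J}_\text{can}^{-1} = -\mathbb{J}_\text{can}$ at the right moment so that the factorization comes out as $\mathbf{B}\mathbb{J}_\text{can}\mathbf{B}^\top$ rather than its negative. Once the conventions are fixed, the argument is three lines of matrix algebra plus an application of the chain rule.
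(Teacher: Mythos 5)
Your proposal is correct and follows essentially the same route as the paper's proof: the same derivation of $\mathbb{B}_\Omega = \mathbf{B}\mathbb{J}_\text{can}\mathbf{B}^\top$ from the Darboux relation and $(-\mathbb{J}_\text{can})^{-1}=\mathbb{J}_\text{can}$, and the same chain-rule identification $\mathbf{D}\tilde H(\tilde z)=\mathbf{B}^\top\nabla_z H(\mathbf{B}\tilde z)$. The only difference is direction — you start from the non-canonical dynamics and push forward to canonical coordinates, while the paper starts from the canonical equations for $\tilde H$ and pulls back — which is immaterial to the equivalence being established.
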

A proof is given in \cref{app:proof-non-canonical-dynamics-darboux}. 
This result demonstrates that with an appropriate change-of-basis that Hamiltonian dynamics with non-canonical Poisson structures may be reduced to the canonical case for a modified Hamiltonian. 

\subsection{Non-Separability of the Obtained Canonical Form}

A difficulty with such a basis transformation is that the Hamiltonian $\tilde{H}(\tilde{\mathbf{q}},\tilde{\mathbf{p}})$ may not remain separable. Because the leapfrog integrator and its variants either require a separable Hamiltonian to be explicit and become implicit if the Hamiltonian is not separable, are we forced to adopt an implicit integration scheme (such as implicit midpoint) in order to integrate these dynamics? Surprisingly, the answer is no: It is possible to devise a {\it symplectic, symmetric, second-order accurate, and  explicit} numerical integrator for non-separable Hamiltonians. This is the subject of the next section.

\subsection{An explicit integrator for the Non-Separable Form}

We use the technique from \citep{PhysRevE.94.043303}. The key insight is to make a copy of the non-separable Hamiltonian and to integrate both systems simultaneously in a phase-space that is expanded to include additional position variables $\tilde{\mathbf{x}}$ and momentum variables $\tilde{\mathbf{y}}$. Specifically, we integrate the modified Hamiltonian, $\hat{H}(\tilde{\mathbf{q}}, \tilde{\mathbf{p}}, \tilde{\mathbf{x}}, \tilde{\mathbf{y}}) = \tilde{H}(\tilde{\mathbf{q}}, \tilde{\mathbf{y}}) + \tilde{H}(\tilde{\mathbf{x}},\tilde{\mathbf{p}}) + 
    \frac{\omega}{2} \paren{\Vert \tilde{\mathbf{q}}-\tilde{\mathbf{x}}\Vert_2^2 + \Vert \tilde{\mathbf{p}}-\tilde{\mathbf{y}}\Vert_2^2}$.
The parameter $\omega > 0$ controls the degree of binding that enforces $(\tilde{\mathbf{q}}, \tilde{\mathbf{p}})$ to be close to $(\tilde{\mathbf{x}}, \tilde{\mathbf{y}})$. We give the precise integration procedure, based on Strang splitting, for this modified Hamiltonian in \cref{app:explicit-integration-scheme}. Upon integrating the dynamics to time $t>0$, beginning at initial position $(\tilde{\mathbf{q}}_0, \tilde{\mathbf{p}}_0, \tilde{\mathbf{q}}_0, \tilde{\mathbf{p}}_0)$, one finds that both $(\tilde{\mathbf{q}}_t, \tilde{\mathbf{p}}_t)$ or $(\tilde{\mathbf{x}}_t, \tilde{\mathbf{y}}_t)$ is an approximate trajectory for the non-separable Hamiltonian $\tilde{H}(\tilde{\mathbf{q}},\tilde{\mathbf{p}})$ in the symplectic basis. This same integration procedure has already been applied to Riemannian manifold HMC in order to devise an explicit integrator for the non-separable Hamiltonians arising in that setting \citep{rmhmc-explicit}. We note that our method with this explicit integrator satisfies detailed balance only in an asymptotic sense, which is a consequence of the expansion of phase-space used in the integrator; see \cref{app:remarks-on-proposal-operator}.

As the explicit integrator yields an approximate trajectory for these same dynamics by \cref{thm:non-canonical-dynamics-darboux}, we treat it as a substitute for the implicit midpoint integrator in non-canonical Hamiltonian Monte Carlo. \Cref{alg:non-canonical-hmc} contains a practical algorithm for non-canonical HMC using either implicit or explicit integration. {\it We emphasize that the substitution of the explicit integrator for the implicit midpoint algorithm cannot be combined with a Metropolis accept-decision rule to yield a reversible Markov chain. Samples generated by non-canonical HMC with the explicit integrator can only be regarded as approximate, even asymptotically. Therefore, the samples produced by the explicit integrator should be compared against a baseline for consistency.}

\section{Experiments}
\label{sec:experiments}

Experiments were implemented in JAX \citep{jax2018github}. Experiments are carried out in 64-bit precision on CPU. In our experiments evaluating the implicit midpoint rule, we set a convergence tolerance of $1\times 10^{-6}$ or the completion of one-hundred inner fixed point iterations, whichever is satisfied first. Computationally, we note that the explicit integrators for the leapfrog and magnetic leapfrog algorithms require two gradient evaluations per step whereas the explicit integrator for non-canonical dynamics requires four gradient evaluations per step. Assuming that the calculation of the gradient is the most expensive step in HMC, we expect, therefore, that the explicit integrator of non-canonical dynamics will perform approximately half the number of sampling iterations as HMC with leapfrog or magnetic leapfrog integrators within the same time period. Although each individual step can be more expensive, the results below demonstrate that the explicit integrator of non-canonical dynamics can yield advantageous samples and advantageous performance per unit of time.

We consider four variants of Poisson structure in this work. Using the block structure of \cref{eq:poisson-matrix}, we examine the following variants.
\begin{enumerate}
    \item {\bf Canonical} Let $\mathbf{A} = \mathbf{A}^*$ and $\mathbf{E}=\mathbf{G}=\mathbf{0}$. When $\mathbf{A}^*$ is positive semi-definite, this corresponds to mass preconditioning. In the logistic regression experiments we take $\mathbf{A}^*$ as the symmetric square-root of the Fisher information at the mode of the posterior and in the differential equation experiment we take $\mathbf{A}^*=\text{Id}$.
    \item {\bf Magnetic Position} The strategy introduced in \citep{pmlr-v70-tripuraneni17a}, we have $\mathbf{G} = \mathbf{G}^*$, $\mathbf{A} = \mathbf{A}^*$, and $\mathbf{E} = \mathbf{0}$. We generate $\mathbf{G}^*$ by sampling a standard normal matrix and inducing skew-symmetry by $\mathbf{G}^* \mapsto (\mathbf{G}^* - (\mathbf{G}^*)^\top) / k$, where $k\in\mathbb{N}$ is a parameter of the method.
    \item {\bf Magnetic Momentum} Named for its analogy to the magnetic position case, we have $\mathbf{E}=\mathbf{E}^*$, $\mathbf{A} = \mathbf{A}^*$, and $\mathbf{G} = \mathbf{0}$. We generate $\mathbf{E}^*$ in the same way as $\mathbf{G}^*$ in the case of magnetic position structure.
    \item {\bf Coupled Magnet} Take $\mathbf{G} = \mathbf{E} = \mathbf{H}^*$ and $\mathbf{A} = \mathbf{A}^*$. We generate $\mathbf{H}^*$ in the same way as $\mathbf{G}^*$.
\end{enumerate}

\subsection{Logistic Regression Experiments}

To demonstrate non-canonical Hamiltonian Monte Carlo, and to compare it to canonical HMC and the magnetic HMC variant, we consider twelve benchmark logistic regression tasks wherein all features are scaled to have zero mean and unit variance. Our aims are two-fold. First, we will want to assess the performance of the non-canonical dynamics in terms of sampling efficacy. Our measure for this purpose is the effective sample size (ESS); a related concept is the ESS per second, which indicates whether the computational cost associated with the algorithms is justified. We set $k=2$ in these experiments.  We additionally show the potential scale reduction metric \citep{gelman1992}, denoted $\hat{R}$, for these experiments.

\begin{figure*}[ht!]
    \centering
    \begin{subfigure}[b]{\textwidth}
        \includegraphics[width=\textwidth]{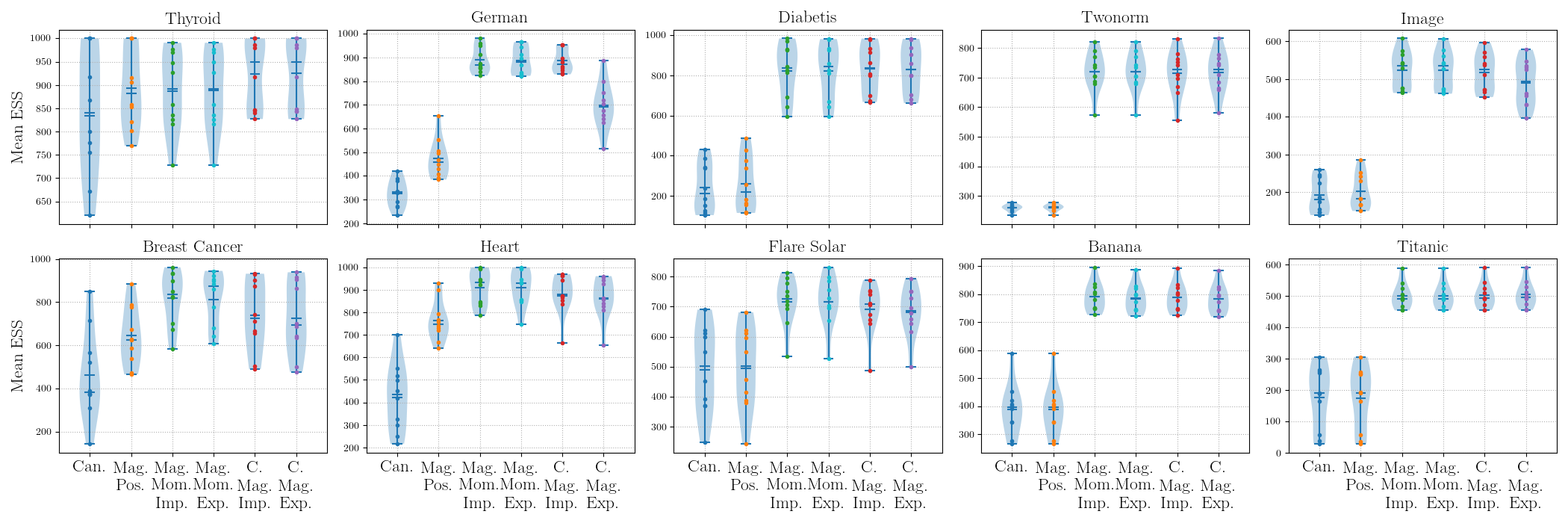}
        \caption{Mean effective sample size}
        \label{subfig:mean}
    \end{subfigure}
    
    \begin{subfigure}[b]{\textwidth}
        \includegraphics[width=\textwidth]{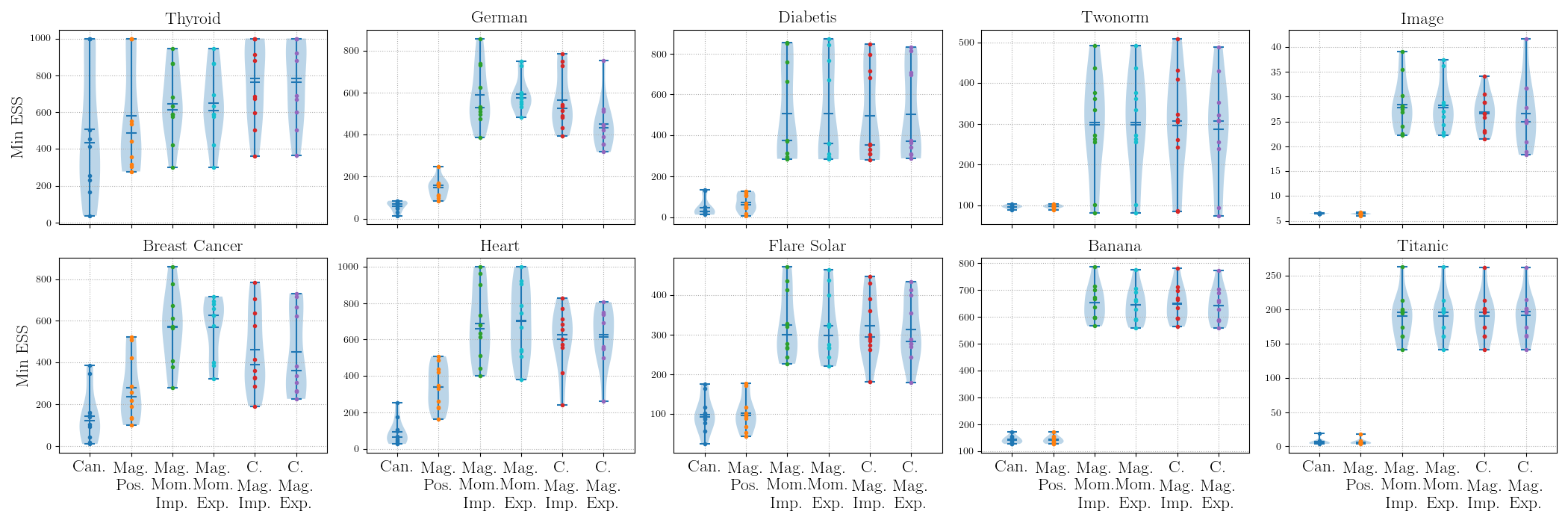}
        \caption{Minimum effective sample size}
        \label{subfig:min}
    \end{subfigure}
    
    \begin{subfigure}[b]{\textwidth}
        \includegraphics[width=\textwidth]{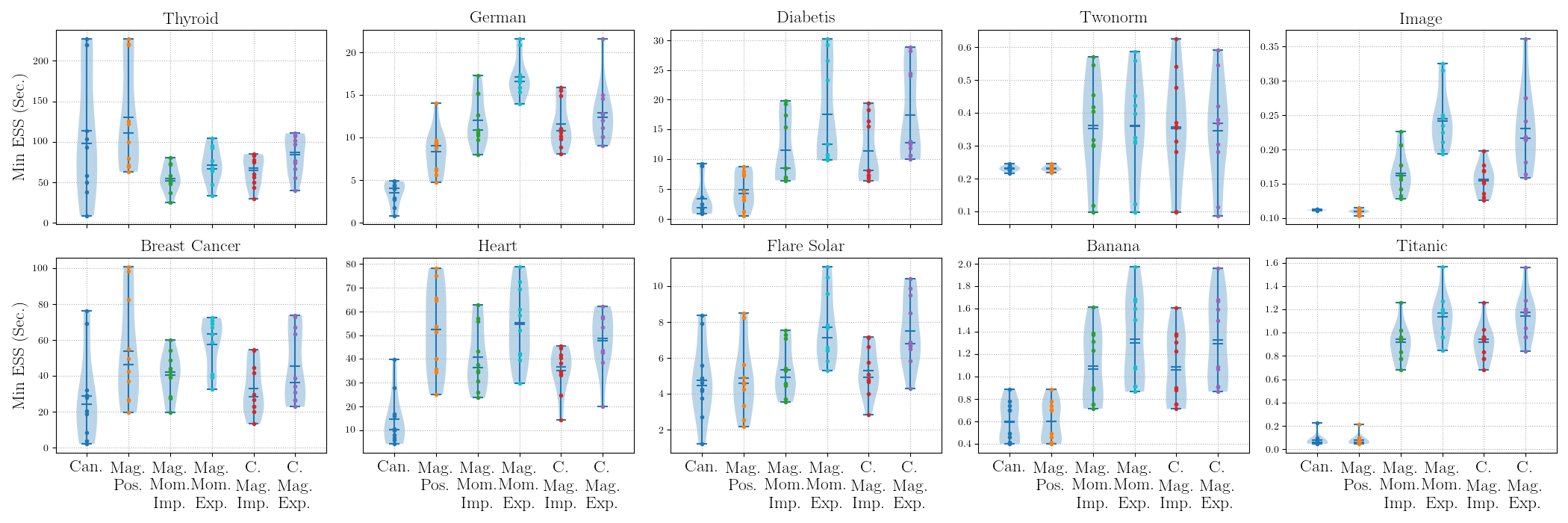}
        \caption{Minimum effective sample size per second}
        \label{subfig:min-per-sec}
    \end{subfigure}
    
    \caption{We evaluate the performance of the non-canonical structure relative to canonical and magnetic variants of Hamiltonian Monte Carlo.  In \cref{subfig:mean}, we find that the average (over all coefficients) estimated effective sample size is significantly larger for HMC procedures with non-canonical structures; the same conclusion holds for minimum effective sample size in \cref{subfig:min}. Moreover, as shown in \cref{subfig:min-per-sec}, in computing the minimum (over all coefficients) effective sample size per second, we also find that the non-canonical dynamics out-perform canonical methods. The methods are abbreviated according to the Poisson structure used as ``Can.'' for canonical, ``Mag. Pos.'' for magnetic position, ``Mag. Mom.'' for magnetic momentum, and ``C. Mag.'' for coupled magnet; where applicable, there is a suffix declaring whether implicit (Imp.) or explicit (Exp.) integration is used.}
    \label{fig:non-canonical-performance}
\end{figure*}
\begin{figure*}[ht!]
    \centering
    \includegraphics[width=\textwidth]{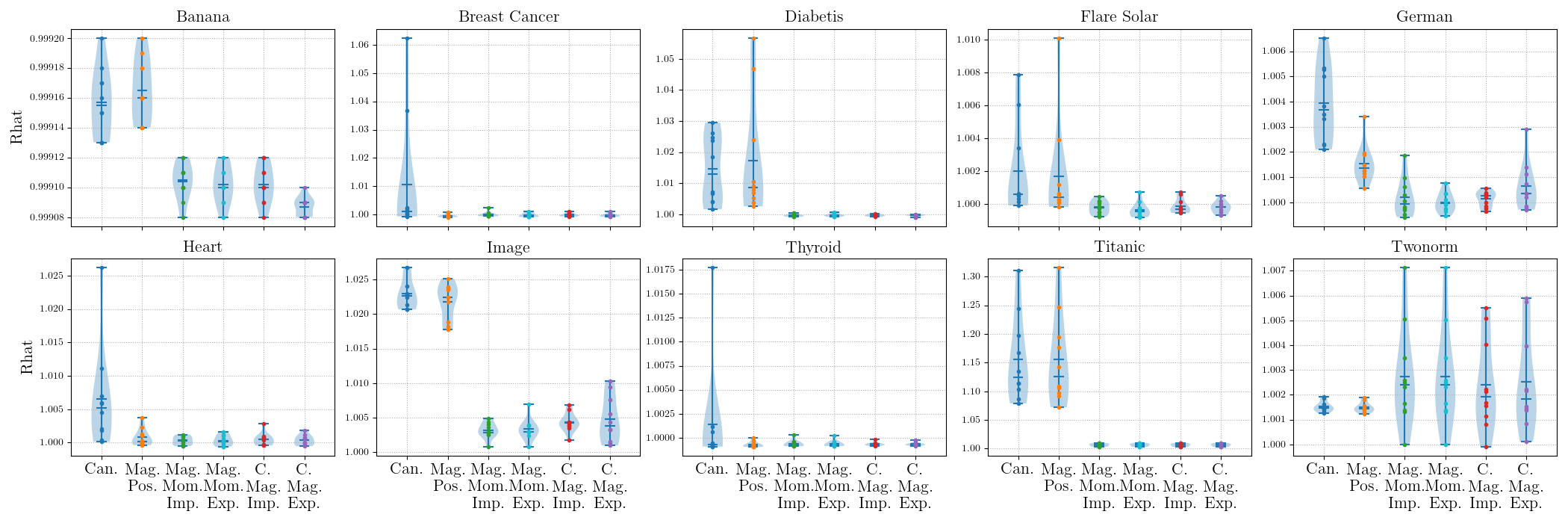}
    \caption{We show the estimated potential scale reduction ($\hat{R}$) for the logistic regression experiments we consider. We find that non-canonical HMC tends to produce smaller $\hat{R}$, although competing methods are below the convergence threshold of 1.05. A notable exception to this is the Titanic dataset, wherein canonical and magnetic HMC struggled to produce convincing convergence.}
    \label{fig:non-canonical-rhat}
\end{figure*}

In our experiments we truncate the effective sample size to the total number of samples. We draw 1,000 samples from a logistic regression posterior with standard normal priors on the coefficients. We integrate Hamiltonian trajectories for one-hundred iteration using a step size given by $1 / (10\cdot n_\text{train})$ where $n_\text{train}$ is the size of the training dataset. The average ESS and the minimum ESS per second are visualized over ten runs of the HMC variants in \cref{fig:non-canonical-performance}. Convergence according to $\hat{R}$ is shown in \cref{fig:non-canonical-rhat}. We find that magnetic momentum performs well in these benchmarks.

\subsection{Fitzhugh-Nagumo Model}

\begin{figure*}[ht!]
    \centering
    \begin{subfigure}[b]{\textwidth}
        \includegraphics[width=\textwidth]{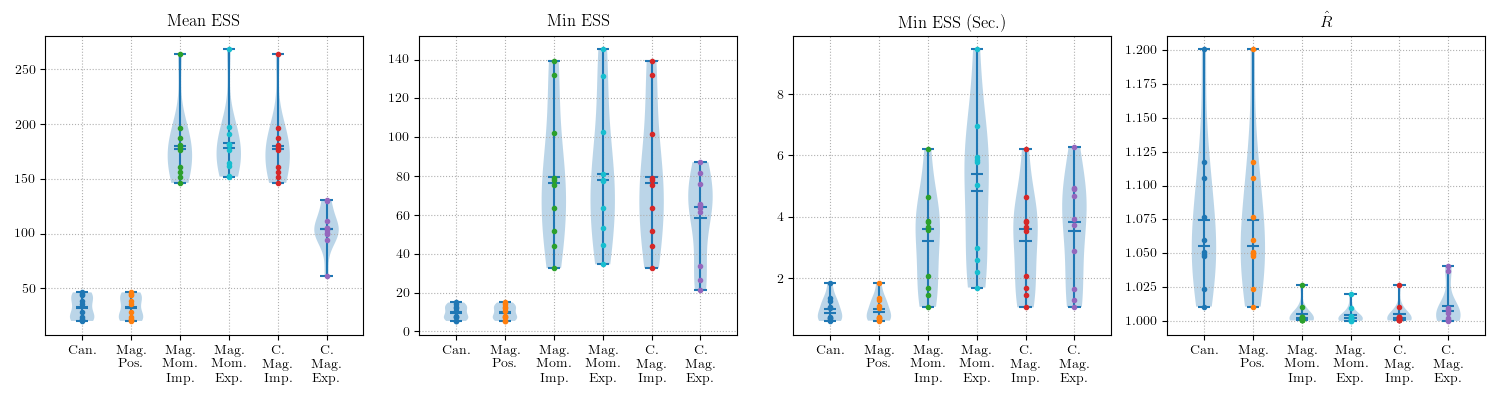}
        \caption{Five steps with step-size $\epsilon=0.005$}
        \label{subfig:fn-a}
    \end{subfigure}
    
    \begin{subfigure}[b]{\textwidth}
        \includegraphics[width=\textwidth]{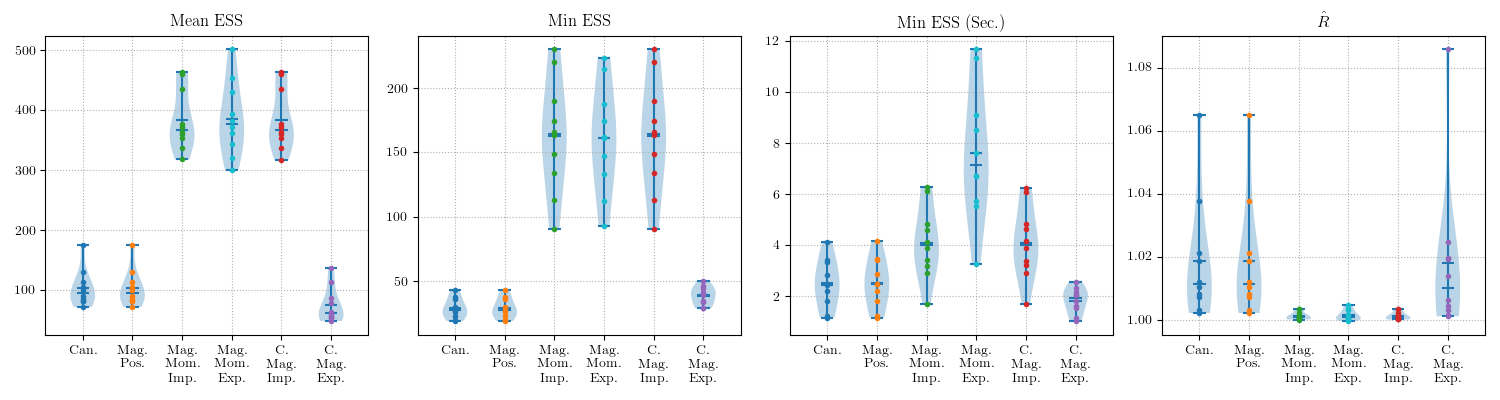}
        \caption{Ten steps with step-size $\epsilon=0.005$}
        \label{subfig:fn-b}
    \end{subfigure}
    
    \begin{subfigure}[b]{\textwidth}
        \includegraphics[width=\textwidth]{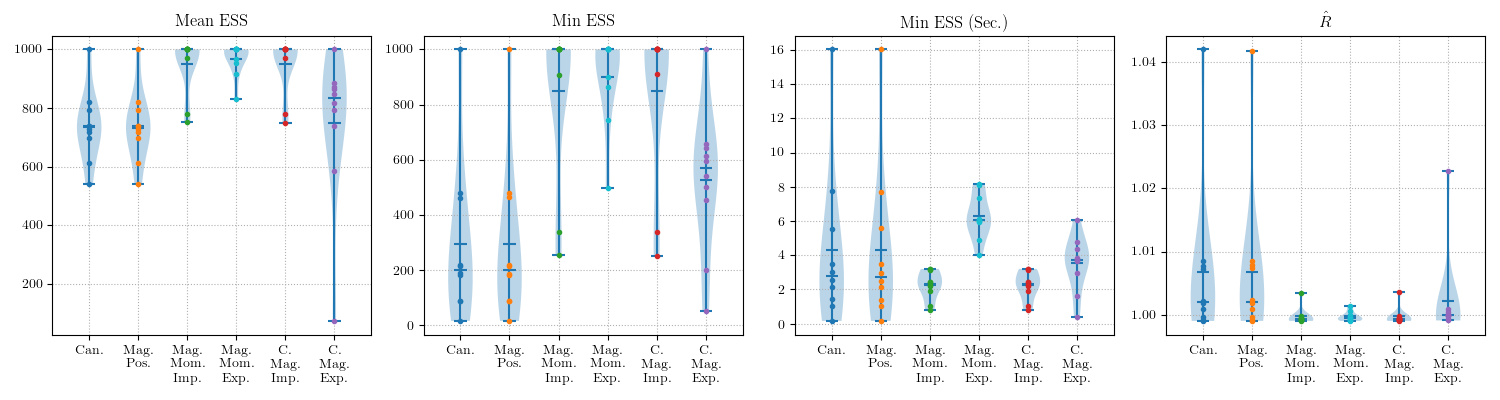}
        \caption{One-hundred steps with step-size $\epsilon=0.005$}
        \label{subfig:fn-c}
    \end{subfigure}
    
    \caption{Performance of non-canonical HMC algorithms on the Fitzhugh-Nagumo model for a fixed step-size of $0.005$ and varying numbers of steps. Non-canonical HMC gives clear improvements in mean ESS and minimum ESS, which the notable exception of the coupled magnet structure with explicit integration. Magnetic momentum with explicit integration is our strongest performing method on this task across the integration periods considered on the minimum ESS per second metric; we note that magnetic momentum with implicit integration is also strong with five and ten integration steps. Non-canonical HMC also exhibits strength in posterior convergence as measured by the $\hat{R}$ statistic (lower is better) indicating superior mixing of the chain into the posterior.}
    \label{fig:fitzhugh-nagumo-performance}
\end{figure*}

We consider inference in a Bayesian model whose likelihood function is computed by solving a non-linear system of differential equations. We specifically consider the Fitzhugh-Nagumo model \citep{pmlr-v70-tripuraneni17a,doi:10.1111/j.1467-9868.2010.00765.x,ijcai2019-791,doi:10.1111/j.1467-9868.2007.00610.x}, whose time derivatives are given by,
\begin{align}
    \dot{V}(t) &= c\cdot(V(t) - V(t)^3 / 3 +R`(t)) \\ \dot{R}(t) &= -(V(t) -a +b\cdot R(t)) / c.
\end{align}
We observe a trajectory of $V(t)$ and $R(t)$ corrupted by Gaussian noise whose true parameters are $a=1/5, b=1/5, c=3$. The noise scale by which $V(t)$ and $R(t)$ are corrupted is set to $\sigma_\text{noise} = 1/10$. We use $\text{Normal}(0, 1/2)$ priors for $a$, $b$, and $c$ and we utilize a Gaussian likelihood with variance $1/2$ over the noise-corrupted observations of $V(t)$ and $R(t)$. Timing comparisons and expected sample size performance are shown in \cref{fig:fitzhugh-nagumo-performance} over ten trials where we generate two-hundred samples from the Fitzhugh-Nagumo model for $t\in[0, 10)$ and integrate the dynamics using a fixed integration step-size of $0.005$. We set $k=50$ in these experiments. Each HMC variant is initialized at the true parameter configuration. We find that non-canonical HMC exhibits better mean ESS and minimum ESS than competing methods and also better convergence as measured by the commonly-used $\hat{R}$ statistic (values closer to one are indicative of better convergence of the chain). Moreover, the explicit integration strategy with the magnetic momentum structure yields superior minimum ESS per second of computation. Notably, explicit integration with the coupled magnet structure struggled on this inference task due to an inability to maintain the Hamiltonian over the integration period resulting in a relatively low rate of transitions.

\begin{table*}[ht!]
\centering
\footnotesize
\begin{tabular}{lccc|ccc|ccc}
\toprule 
& \multicolumn{3}{c}{$a$} & \multicolumn{3}{c}{$b$} & \multicolumn{3}{c}{$c$} \\
& Mean & Std. & Median & Mean & Std. & Median & Mean & Std. & Median \\\bottomrule
Canonical & 0.18 &   0.01 &   0.18  & 0.28 &   0.07 &   0.28 & 2.97 &   0.05 &   2.97 \\
Mag. Pos. & 0.18 &   0.01 &   0.18  &   0.28 &   0.07 &   0.28   & 2.97 &   0.05 &   2.97 \\
Mag. Mom. Imp. & 0.19 &   0.02 &   0.19 & 0.28 &   0.08 &   0.29 & 2.96 &   0.05 &   2.96 \\
Mag. Mom. Exp. & 0.19 &   0.02 &   0.19 & 0.28 &   0.08 &   0.29 & 2.96 &   0.05 &   2.97 \\
C. Mag. Imp. & 0.19 &   0.02 &   0.19 & 0.28 &   0.08 &   0.29 & 2.96 &   0.05 &   2.97 \\
C. Mag. Exp. & 0.18 &   0.02 &   0.19 & 0.28 &   0.10 &   0.28 & 2.96 &   0.06 &   2.97 \\
\bottomrule
\end{tabular}
\caption{Comparison of six coefficient inferences in the Fitzhugh-Nagumo model (which has three parameters $(a,b,c)\in\mathbb{R}^3)$) using canonical HMC, non-canonical HMC with implicit integration (correct MCMC) and explicit integration (approximate MCMC). Results for 100 steps using a step-size of 0.005. The inferences produced by any of the six methods are in approximate agreement.}
\label{tab:fitzhugh-nagumo}
\end{table*}

To provide evidence that the explicit integrators yield approximate samples that are faithful to the target posterior, we report summary statistics for samples generated by the six competing competing structures in \cref{tab:fitzhugh-nagumo}

\section{Conclusion}

This work examines non-canonical Hamiltonian dynamics for use in Hamiltonian Monte Carlo. We give a procedure, based on implicit integration, that yields a provably reversible Markov chain when combined with a Metropolis correction. We further propose an explicit integration scheme for non-canonical dynamics. By using Darboux's Theorem and the symplectic Gram-Schmidt procedure, we construct a coordinate system in which the non-canonical symplectic form assumes the canonical structure. Samples generated by the explicit integrator are not generated from a strictly reversible Markov chain, however, and must be regarded as approximate samples. Nevertheless, we find the samples generated from this procedure faithful to the true posterior and, therefore, the approximation of implicit integration by the explicit method a useful one. We compare these integration strategies with the competing leap-frog integrators for canonical and magnetic position Hamiltonian dynamics on benchmark inference problems. We find that non-canonical symplectic structure leads to more appealing inferences as computed by the mean effective sample size and the minimum effective sample size per second. 

As a direction for future work, it would be desirable to identify symplectic integrators which apply to non-separable Hamiltonians and for which it is not necessary to double the number of gradient evaluations, as the explicit integrator requires. Another important direction is to discover ``recipes'' from which one may prescribe a non-canonical symplectic structure to an inference problem. In our experiments, the choice to generate non-canonical sub-blocks of the Poisson structure by skew-symmetrizing standard normal matrices was an arbitrary one. Such recipes exist for Riemannian manifold HMC in the form of the Fisher information metric; is there a theoretically compelling means to introduce non-canonical symplectic structure as well?

\section*{Broader Impact}

Inference algorithms form the foundation of modern statistical procedures. Markov chain Monte Carlo has established itself as an elegant procedure to draw samples from complex Bayesian posteriors. In this work, we consider the Hamiltonian Monte Carlo (HMC) algorithm, which underlies popular inference libraries such as Stan. The focus of our work is to develop and investigate a generalization of HMC by investigating its underlying geometric foundations, particularly symplectic geometry. Our empirical results suggest that these variants of HMC exhibit better effective sample sizes. The ability to collect random samples which are closer to independent (as opposed to containing high autocorrelation) mean lower variance estimates of quantities of statistical interest, such as mean values. Hence, the potential broader impact of this work is to develop sampling algorithms that generate lower-variance estimates. This will be of interest to fields that utilize Bayesian modeling frameworks such as economics, political science, medicine, and many others. Procedural improvements in these domains will result in more accurate conclusions and better characterizations of uncertainties.

\section*{Acknowledgments}

This material is based upon work supported by the National Science Foundation Graduate Research Fellowship under Grant No. 1752134. Any opinion, findings, and conclusions or recommendations expressed in this material are those of the authors and do not necessarily reflect the views of the National Science Foundation. We thank the Yale Center for Research Computing for helpful advice regarding the computational environment. We thank Marcus Brubaker for insightful discussions and the anonymous reviewers for their helpful comments.

\bibliography{thebib}

\newpage
\onecolumn
\appendix

\section{Hamiltonian Monte Carlo}\label{app:hamiltonian-monte-carlo}

For completeness we briefly review the relationship between Markov chain Monte Carlo and the numerical integration of Hamiltonian dynamics. Suppose we are given a symplectic integrator (such as leapfrog or implicit midpoint as discussed in \cref{app:numerical-integration}) and a potential function $U(\mathbf{q})$ which defines a density $p(\mathbf{q})\propto \exp(-U(\mathbf{q}))$. To generate samples from $p(\mathbf{q})$, HMC defines a separable Hamiltonian
\begin{align}
    H(\mathbf{q},\mathbf{p}) = U(\mathbf{q}) + \frac{1}{2}\mathbf{p}^\top \mathbf{p}.
\end{align}
By considering the distribution $p(\mathbf{q},\mathbf{p}) \propto\exp(-H(\mathbf{q},\mathbf{p}))$ in phase-space, one observes two properties: (i) the marginal over $\mathbf{p}$ of $p(\mathbf{q},\mathbf{p})$ is $p(\mathbf{q})$ and (ii) the marginal distribution over $\mathbf{q}$ of $p(\mathbf{q},\mathbf{p})$ is standard normal. Given an initial position $\mathbf{q}$ and sampling $\mathbf{p}$ from its marginal distribution, we applying the symplectic integrator for some number of steps to yield a point in phase-space $(\mathbf{q}',\mathbf{p}')$ which is then mapped to the candidate proposal $(\mathbf{q}',-\mathbf{p}')$ (this sign flip is necessary to ensure reversibility of the proposal distribution). Exploiting the volume preservation and symmetry of the integrator, one can demonstrate that accepting or rejecting the candidate according to a Metropolis-Hastings step yields a Markov chain satisfying detailed balance in phase-space for the distribution $p(\mathbf{q},\mathbf{p})$. Iterating this procedure, and ultimately projecting samples to the position variable $\mathbf{q}$ alone, yields samples from the target density $p(\mathbf{q})\propto \exp(-U(\mathbf{q}))$.

\section{Numerical Integration}\label{app:numerical-integration}

An important, and possibly surprising, fact about discretizations of Hamilton's equations of motion is the existence of integrators that preserve properties (ii) and (iii) of the continuous-time dynamics. Such integrators are called symplectic integrators, meaning that the discrete flow they generate is a symplectic map. If an integrator also preserves energy then it is an exact solution to Hamilton's equations of motion; see \citep{Marsden1999ParkCL}.

\begin{definition}[Poisson Bracket]
Let $V$ be a vector space and let $Z = V\times V^*$ be equipped with a symplectic structure $\Omega$. Given two functions $F,G$ on $Z$, we defined the Poisson bracket of $F$ and $G$ to be the function
\begin{align}
    \set{F, G}(z) &= \Omega(X_F(z), X_G(z)) \\
    &= \mathbf{D} F(z)\cdot X_G(z) \\
    &= \mathbf{D} F(z)\cdot \mathbb{B} ~\mathbf{D} G(z)
\end{align}
where $X_F$ ($X_G$, respectively) is the Hamiltonian vector field corresponding to $F$ ($G$, resp.).
\end{definition}

\begin{definition}[Symplectic Diffeomorphism and Poisson Automorphism \citep{Austin1993AlmostPI,marsden2002introduction}]
A diffeomorphism $\Phi$ is a Poisson automorphism if it preserves the Poisson structure:
\begin{align}
    (\set{F, G}\circ \Phi)(z) = \set{(F\circ \Phi)(z), (G\circ \Phi)(z)}.
\end{align}
A diffeomorphism $\Phi$ is called symplectic if
\begin{align}
    \Omega(v, w) = \Omega(\mathbf{D}\Phi(z)\cdot v, \mathbf{D}\Phi(z) \cdot w).
\end{align}
for all $v, w, z\in Z$. A diffeomorphism is symplectic if and only if it is a Poisson automorphism.
\end{definition}

We will express the discrete flow of an integrator with step-size $\epsilon$ by $z_{n+1} = \Phi_\epsilon(z_n)$

The most popular implementation of a symplectic integrator for MCMC purposes is the leapfrog integrator. Assuming a separable Hamiltonian with quadratic kinetic energy, a vector space with canonical symplectic structure, and an integration step-size $\epsilon > 0$, the leapfrog integrator updates position and momentum variables according to $(\mathbf{q}_0, \mathbf{p}_0)\to (\mathbf{q}_1, \mathbf{p}_1)$ as
\begin{align}
    \mathbf{p}_{1/2} &= \mathbf{p}_0 - \frac{\epsilon}{2} \nabla_\mathbf{q} U(\mathbf{q}_0) \\
    \mathbf{q}_1 &= \mathbf{q}_0 + \epsilon ~\mathbf{p}_{1/2} \\
    \mathbf{p}_{1} &= \mathbf{p}_{1/2} - \frac{\epsilon}{2} \nabla_\mathbf{q} U(\mathbf{q}_1).
\end{align}
Generalizations of the leapfrog algorithm for HMC exist that can handle non-separable Hamiltonians \citep{doi:10.1111/j.1467-9868.2010.00765.x} and holonomic manifold constraints exist \citep{pmlr-v22-brubaker12}. However, these assume the canonical symplectic structure.

The leapfrog rule is not the only choice for a symplectic integration procedure, however. Another algorithm, known as the implicit midpoint rule, is an integration procedure we will encounter in this work.
\begin{definition}[Implicit Midpoint Integrator]
Let $X_H(z)$ be a Hamiltonian vector field corresponding to Hamiltonian $H$. The implicit midpoint integrator is defined by the (implicit) update
\begin{align}
    z_{n+1} = z_n + \epsilon\cdot X_H\paren{\frac{z_{n+1} + z_n}{2}}
\end{align}
where $\epsilon > 0$ is the integration step-size.
\end{definition}
In practice, the implicit midpoint integrator can be implemented by fixed-point iteration. Integrators are called symmetric if their discrete flow satisfies $\Phi_\epsilon\circ \Phi_{-\epsilon} = \text{Id}$. Integrators are called $k^\text{th}$-order accurate if for all $l\leq k$,
\begin{align}
    \frac{\mathrm{d}^l}{\d{\epsilon^l}} \Phi_{\epsilon}(z)\big|_{\epsilon=0} =     \frac{\mathrm{d}^l}{\d{t^l}} z\big|_{t=0}.
\end{align}
Both the leapfrog integrator and the implicit midpoint rule are symmetric, symplectic and second-order accurate for canonical dynamics. Remarkably, in the case of the midpoint rule, more can be said. The following result is from \citep{Marsden1999ParkCL,Austin1993AlmostPI}.
\begin{theorem}
The implicit midpoint integrator is symmetric, symplectic, and second-order accurate. These properties hold for non-canonical Hamiltonian dynamics with constant Poisson structure.
\end{theorem}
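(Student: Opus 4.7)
My plan is to prove the three properties — symmetry, symplecticity, and second-order accuracy — in order, noting where constancy of $\mathbb{B}_\Omega$ is used. Symmetry I would dispatch by a short manipulation of the defining relation $z_{n+1} = z_n + \epsilon X_H((z_n + z_{n+1})/2)$. If $z_1 = \Phi_\epsilon(z_0)$, then $\Phi_{-\epsilon}(z_1)$ is the unique (for small $\epsilon$) point $z_*$ satisfying $z_* = z_1 - \epsilon X_H((z_1 + z_*)/2)$. Substituting the trial value $z_* = z_0$ reproduces the original update rearranged, so by local uniqueness $\Phi_{-\epsilon}\circ\Phi_\epsilon = \text{Id}$.

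The substantive step is symplecticity. I would differentiate the implicit definition with respect to $z_0$ and solve for the Jacobian $\mathbf{D}\Phi_\epsilon$ of the step map. Writing $A \defeq \mathbf{D}X_H(z_m)$ for the Jacobian of the Hamiltonian vector field at the midpoint $z_m = (z_0+z_1)/2$, differentiation gives $(\text{Id} - \tfrac{\epsilon}{2}A)\,\mathbf{D}\Phi_\epsilon = \text{Id} + \tfrac{\epsilon}{2}A$, so that $\mathbf{D}\Phi_\epsilon = (\text{Id} - \tfrac{\epsilon}{2}A)^{-1}(\text{Id} + \tfrac{\epsilon}{2}A)$ is the Cayley-type transform of $A$. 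This is precisely where constancy of $\mathbb{B}_\Omega$ enters: from $X_H = \mathbb{B}_\Omega\mathbf{D}H$ one obtains $A = \mathbb{B}_\Omega\nabla^2 H(z_m)$ cleanly, whereas a state-dependent $\mathbb{B}_\Omega$ would contribute an extra $(\partial\mathbb{B}_\Omega/\partial z)\mathbf{D}H$ term and spoil the Cayley form. Since $\mathbb{J}_\Omega A = -\nabla^2 H$ is symmetric, one obtains the Hamiltonian-matrix identity $A^\top\mathbb{J}_\Omega = -\mathbb{J}_\Omega A$, which can be rearranged to $(\text{Id}\pm\tfrac{\epsilon}{2}A^\top) = \mathbb{J}_\Omega(\text{Id}\mp\tfrac{\epsilon}{2}A)\mathbb{J}_\Omega^{-1}$. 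Substituting these relations into $(\mathbf{D}\Phi_\epsilon)^\top\mathbb{J}_\Omega\,\mathbf{D}\Phi_\epsilon$ and using that $\text{Id}+\tfrac{\epsilon}{2}A$ and $\text{Id}-\tfrac{\epsilon}{2}A$ commute as polynomials in $A$ collapses the whole expression to $\mathbb{J}_\Omega$. The hard part of the entire proof is this block of algebra, together with the observation that the Cayley structure is what preserves any constant skew-symmetric $\mathbb{J}_\Omega$, not merely $\mathbb{J}_\text{can}$.

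For second-order accuracy I would invoke the classical fact that a symmetric, consistent one-step method has even order; combined with the trivial first-order consistency of the midpoint rule (the $O(\epsilon)$ Taylor coefficient matches by inspection), this forces the order to be at least two. A direct Taylor-expansion check also works: both the exact flow and the implicit update agree through $O(\epsilon^2)$ because the midpoint evaluation builds in the half-step correction, with residual of size $O(\epsilon^3)$. Symmetry and accuracy are essentially bookkeeping once the symplecticity algebra is in hand.
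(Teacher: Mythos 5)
Your proposal is correct, and it is worth noting that the paper does not actually prove this theorem itself: it is stated as a known result and deferred to the cited references (Marsden et al.; Austin et al.). Your argument is the standard self-contained one, and all the key steps check out. Symmetry via substituting $z_* = z_0$ into the reversed implicit relation and invoking local uniqueness of the fixed point (which holds for $\epsilon$ small enough that $z \mapsto z_n + \epsilon X_H((z_n+z)/2)$ is a contraction) is fine. The symplecticity computation is the crux and is done correctly: differentiating the implicit relation, with the midpoint depending on $z_0$ both directly and through $z_1$, yields $(\text{Id} - \tfrac{\epsilon}{2}A)\,\mathbf{D}\Phi_\epsilon = \text{Id} + \tfrac{\epsilon}{2}A$ with $A = \mathbb{B}_\Omega \nabla^2 H(z_m)$; the identity $\mathbb{J}_\Omega A = -\nabla^2 H$ (using $\mathbb{B}_\Omega = (-\mathbb{J}_\Omega)^{-1}$) gives the Hamiltonian-matrix relation $A^\top \mathbb{J}_\Omega = -\mathbb{J}_\Omega A$, and the Cayley transform of any such $A$ preserves the constant form $\mathbb{J}_\Omega$ because rational functions of $A$ commute. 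You also correctly isolate where constancy of the Poisson structure is used: a state-dependent $\mathbb{B}_\Omega$ adds a $(\partial \mathbb{B}_\Omega/\partial z)\,\mathbf{D}H$ term to $A$ that destroys the Hamiltonian-matrix property, which is consistent with the paper restricting the claim to constant structures. The order-two argument (symmetric plus consistent implies even order, or direct Taylor matching through $O(\epsilon^2)$ against the paper's definition of $k$-th order accuracy) is standard and applies verbatim in the non-canonical constant case since only the form of $X_H$, not of $\mathbb{J}_\Omega$, enters the expansion. In short, your write-up supplies the proof the paper outsources, and does so by the classical route.
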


\section{Proof of \Cref{thm:time-reversal-poisson}}\label{app:proof-time-reversal-poisson}

We want to generalize Lemma 2 in the Magnetic HMC paper \citep{pmlr-v70-tripuraneni17a} to the case of positionally-varying sympletic structure. Let's begin with a proof of the usual reversibility of Hamiltonian dynamics to get a flavor for the argument. Recall that Hamilton's (canonical) equations of motion are,
\begin{align}
    \dot{\mathbf{q}} = \nabla_\mathbf{p} H(\mathbf{q}, \mathbf{p}) ~~~~~~~~~~~ \dot{p} = -\nabla_\mathbf{q} H(\mathbf{q}, \mathbf{p})
\end{align}
We were to consider the negation of $p$ these equations would read:
\begin{align}
    \dot{\mathbf{q}} = \nabla_\mathbf{p} H(\mathbf{q}, -\mathbf{p}) ~~~~~~~~~~~ -\dot{p} = -\nabla_\mathbf{q} H(\mathbf{q}, -\mathbf{p})
\end{align}
Let $\tilde{\mathbf{p}} = -\mathbf{p}$. Then the evolution of the negated momentum satisfies $\frac{\d{}}{\d{t}}\tilde{\mathbf{p}} = -\nabla_\mathbf{q} H(\mathbf{q}, \tilde{\mathbf{p}})$. The evolution of state satisfies,
\begin{align}
    \dot{\mathbf{q}} &= \nabla_\mathbf{p} H(\mathbf{q}, -\mathbf{p}) \\
    &= -\nabla_{-p} H(\mathbf{q}, -\mathbf{p}) \\
    &= -\nabla_{\tilde{p}} H(\mathbf{q}, \tilde{\mathbf{p}})
\end{align}
Hence $\frac{\d{}}{\d{t}}\tilde{\mathbf{p}} = -\nabla_\mathbf{q} H(\mathbf{q}, \tilde{\mathbf{p}})$. Now suppose that the Hamiltonian has a form such that $H(\mathbf{q}, \mathbf{p}) = H(\mathbf{q}, -\mathbf{p})$, as is typically the case for quadratic kinetic energy components. This yields $\dot{\mathbf{q}} = -\nabla_\mathbf{p} H(\mathbf{q}, \mathbf{p})$. Then by identifying the negative time derivative as the derivative in negative time, we can view these equations as describing behavior of the system when time is run in reverse:
\begin{align}
    \frac{\d{}}{\d{(-t)}} \mathbf{q} &= \nabla_{\tilde{\mathbf{p}}} H(\mathbf{q}, \tilde{\mathbf{p}}) = \nabla_\mathbf{p} H(\mathbf{q}, \mathbf{p}) \\
    \frac{\d{}}{\d{(-t)}} \mathbf{p} &= -\nabla_\mathbf{q} H(\mathbf{q}, \tilde{\mathbf{p}}) = -\nabla_\mathbf{q} H(\mathbf{q}, \mathbf{p})
\end{align}

We want a generalization of Lemma 2 that allows for state-dependent symplectic structure. Let's postulate the following dynamics:
\begin{align}\label{eq:original-hamiltonian}
    \begin{bmatrix} \dot{\mathbf{q}} \\ \dot{\mathbf{p}} \end{bmatrix} &= \begin{bmatrix} \mathbf{E}(\mathbf{q}, \mathbf{p}) & \mathbf{F}(\mathbf{q}, \mathbf{p}) \\ -\mathbf{F}^\top(\mathbf{q}, \mathbf{p}) & \mathbf{G}(\mathbf{q}, \mathbf{p}) \end{bmatrix} \begin{bmatrix} \nabla_\mathbf{q} H(\mathbf{q}, \mathbf{p}) \\\nabla_\mathbf{p} H(\mathbf{q}, \mathbf{p})\end{bmatrix}
\end{align}
If we consider the negation of $\mathbf{p}$ we obtain the dynamics:
\begin{align}
    \begin{bmatrix} \dot{\mathbf{q}} \\ -\dot{\mathbf{p}} \end{bmatrix} &= \begin{bmatrix} \mathbf{E}(\mathbf{q}, -\mathbf{p}) & \mathbf{F}(\mathbf{q}, -\mathbf{p}) \\ -\mathbf{F}^\top(\mathbf{q}, -\mathbf{p}) & \mathbf{G}(\mathbf{q}, -\mathbf{p}) \end{bmatrix} \begin{bmatrix} \nabla_\mathbf{q} H(\mathbf{q}, -\mathbf{p}) \\\nabla_\mathbf{p} H(\mathbf{q}, -\mathbf{p})\end{bmatrix} \\
    &= \begin{bmatrix}
        \mathbf{E}(\mathbf{q}, -\mathbf{p}) \nabla_\mathbf{q} H(\mathbf{q}, -\mathbf{p})  + \mathbf{F}(\mathbf{q}, -\mathbf{p})\nabla_\mathbf{p} H(\mathbf{q}, -\mathbf{p}) \\
        -\mathbf{F}^\top(\mathbf{q}, -\mathbf{p})\nabla_\mathbf{q} H(\mathbf{q}, -\mathbf{p}) + \mathbf{G}(\mathbf{q}, -\mathbf{p})\nabla_\mathbf{p} H(\mathbf{q}, -\mathbf{p})
    \end{bmatrix} \\
    &= \begin{bmatrix}
        \mathbf{E}(\mathbf{q}, -\mathbf{p}) \nabla_\mathbf{q} H(\mathbf{q}, \mathbf{p}) - \mathbf{F}(\mathbf{q}, -\mathbf{p})\nabla_\mathbf{p} H(\mathbf{q}, \mathbf{p}) \\
        \mathbf{F}^\top(\mathbf{q}, -\mathbf{p})\nabla_\mathbf{q} H(\mathbf{q}, \mathbf{p}) - \mathbf{G}(\mathbf{q}, -\mathbf{p})\nabla_\mathbf{p} H(\mathbf{q}, \mathbf{p})
    \end{bmatrix} \\
    \begin{bmatrix} -\dot{\mathbf{q}} \\ -\dot{\mathbf{p}} \end{bmatrix} &= \begin{bmatrix}
        -\mathbf{E}(\mathbf{q}, -\mathbf{p}) \nabla_\mathbf{q} H(\mathbf{q}, \mathbf{p}) + \mathbf{F}(\mathbf{q}, -\mathbf{p})\nabla_\mathbf{p} H(\mathbf{q}, \mathbf{p}) \\
        \mathbf{F}^\top(\mathbf{q}, -\mathbf{p})\nabla_\mathbf{q} H(\mathbf{q}, \mathbf{p}) - \mathbf{G}(\mathbf{q}, -\mathbf{p})\nabla_\mathbf{p} H(\mathbf{q}, \mathbf{p})
    \end{bmatrix} \\
    &= \begin{bmatrix} -\mathbf{E}(\mathbf{q}, -\mathbf{p}) & \mathbf{F}(\mathbf{q}, -\mathbf{p}) \\ -\mathbf{F}^\top(\mathbf{q}, -\mathbf{p}) & -\mathbf{G}(\mathbf{q}, -\mathbf{p}) \end{bmatrix} \begin{bmatrix} \nabla_\mathbf{q} H(\mathbf{q}, \mathbf{p}) \\ \nabla_\mathbf{p} H(\mathbf{q}, \mathbf{p}) \end{bmatrix} \label{eq:reverse-hamiltonian}
\end{align}
Making again the identification of the negative time derivative with the derivative in negative time, we find the reversed dynamics satisfy the natural state-dependent symplectic generalization of Lemma 2. Here is the physical interpretation of this result: Let $(\mathbf{q}(t), \mathbf{p}(t))$ be the phase-space position at time $t$ if the system evolves according to the Hamiltonian in \cref{eq:original-hamiltonian}. Suppose we terminate the dynamics at a fixed time $\tau$ and consider the reversed trajectory $(\tilde{\mathbf{q}}(t), \tilde{\mathbf{p}}(t)) = (\mathbf{q}(\tau - t), -\mathbf{p}(\tau - t))$. Then $(\tilde{\mathbf{q}}(t), \tilde{\mathbf{p}}(t))$ is a solution to the ``reversed'' dynamics in \cref{eq:reverse-hamiltonian} and at time $\tau$ we will have $(\tilde{\mathbf{q}}(\tau), \tilde{\mathbf{p}}(\tau)) = (\mathbf{q}(0), -\mathbf{p}(0))$.

\section{Proof of \Cref{thm:non-canonical-dynamics-darboux}}\label{app:proof-non-canonical-dynamics-darboux}

{\bf Theorem 3.} Let $\Lambda$ be a (possibly non-canonical) Poisson structure corresponding to the symplectic structure $\Omega$. Let $\mathbf{F}$ be the change-of-basis matrix for which $\Omega$ assumes the canonical form. Then the canonical dynamics given by Hamilton's equations of motion with Hamiltonian $\tilde{H}(\tilde{\mathbf{q}},\tilde{\mathbf{p}})$ are equivalent to non-canonical Hamiltonian dynamics with Poisson structure $\Lambda$. Moreover, $\mathbb{B} = \mathbf{B}\mathbb{J}_\text{can}\mathbf{B}^\top$ and $\frac{\mathrm{d}}{\mathrm{d}t}(\tilde{\mathbf{q}},\tilde{\mathbf{p}})^\top = \mathbb{J}_\text{can}\mathbf{B}^\top\nabla_z H(\mathbf{B}(\tilde{\mathbf{q}},\tilde{\mathbf{p}})^\top)$.

To prove this theorem, we will first require the following lemma.
\begin{lemma}
$\mathbb{B} = \mathbf{B}\mathbb{J}_\text{can}\mathbf{B}^\top$
\end{lemma}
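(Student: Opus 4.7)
The plan is to start from the change-of-basis identity for a bilinear form and invert it. Since $\Omega$ is a bilinear form with matrix $\mathbb{J}_\Omega$ in the original basis and matrix $\mathbb{J}_\text{can}$ in the Darboux basis (guaranteed by \cref{thm:darboux-theorem}), and since $\mathbf{B}$ is the change-of-basis matrix whose columns are the new basis vectors expressed in the old coordinates, the standard transformation law for a bilinear form yields
\begin{equation*}
    \mathbf{B}^\top \mathbb{J}_\Omega \mathbf{B} = \mathbb{J}_\text{can},
\end{equation*}
as already recorded in the caption of \cref{tab:symplectic-bases}. I would briefly justify this by evaluating $\Omega(u,v) = u^\top \mathbb{J}_\Omega v$ at $u = \mathbf{B}\tilde{u}$ and $v = \mathbf{B}\tilde{v}$ and noting that the result must equal $\tilde{u}^\top \mathbb{J}_\text{can}\tilde{v}$ by Darboux's theorem.

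Next I would invert both sides: since $\mathbf{B}$ is invertible (it is a change-of-basis matrix), we obtain
\begin{equation*}
    \mathbb{J}_\Omega^{-1} = \mathbf{B}\, \mathbb{J}_\text{can}^{-1}\, \mathbf{B}^\top.
\end{equation*}
Then I would use the elementary identity $\mathbb{J}_\text{can}^{-1} = -\mathbb{J}_\text{can}$, which follows directly from the block form in \cref{eq:jcanon}, together with the definition $\mathbb{B}_\Omega \defeq -\mathbb{J}_\Omega^{-1}$ to conclude
\begin{equation*}
    \mathbb{B}_\Omega = -\mathbb{J}_\Omega^{-1} = -\mathbf{B}\,(-\mathbb{J}_\text{can})\,\mathbf{B}^\top = \mathbf{B}\,\mathbb{J}_\text{can}\,\mathbf{B}^\top,
\end{equation*}
which is the desired identity.

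There is no real obstacle here; the proof is a short linear-algebra calculation. The only point worth being careful about is the direction of the change-of-basis convention (whether $\mathbf{B}$ or $\mathbf{F} = \mathbf{B}^{-1}$ appears in the conjugation), and the sign from $\mathbb{J}_\text{can}^{-1} = -\mathbb{J}_\text{can}$, which is precisely what reconciles the minus sign in the definition $\mathbb{B}_\Omega = -\mathbb{J}_\Omega^{-1}$ with the positive sign in the stated formula $\mathbb{B}_\Omega = \mathbf{B}\mathbb{J}_\text{can}\mathbf{B}^\top$.
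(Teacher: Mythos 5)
Your proof is correct and follows essentially the same route as the paper's: both start from the Darboux change-of-basis identity (you write it as $\mathbf{B}^\top\mathbb{J}_\Omega\mathbf{B} = \mathbb{J}_\text{can}$, the paper equivalently as $\mathbb{J} = \mathbf{F}^\top\mathbb{J}_\text{can}\mathbf{F}$ with $\mathbf{F}=\mathbf{B}^{-1}$), invert both sides, and use $(-\mathbb{J}_\text{can})^{-1} = \mathbb{J}_\text{can}$ to land on $\mathbb{B}_\Omega = \mathbf{B}\mathbb{J}_\text{can}\mathbf{B}^\top$. No gaps.
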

\begin{proof}
Recall the fundamental property of the Darboux basis: $\mathbb{J} = \mathbf{F}^\top \mathbb{J}_\text{can}\mathbf{F}$. The matrix of the Poisson structure is related to the matrix of the symplectic structure according to $\mathbb{B}=(-\mathbb{J})^{-1}$. Recall $\mathbf{B}=\mathbf{F}^{-1}$.
We have,
\begin{align}
    (-\mathbb{J})^{-1} &= (\mathbf{F}^\top(-\mathbb{J}_\text{can})\mathbf{F})^{-1} \\
    &= \mathbf{F}^{-1} (-\mathbb{J}_\text{can})^{-1} (\mathbf{F}^\top)^{-1} \\
    &= \mathbf{B} \mathbb{J}_\text{can} \mathbf{B}^\top.
\end{align}
It is easily verified that $(-\mathbb{J}_\text{can})^{-1} = \mathbb{J}_\text{can}$.
\end{proof}

\begin{proof}
Let $\tilde{z} = \mathbf{F}z$ and recall $\mathbf{B}=\mathbf{F}^{-1}$. In canonical coordinates, Hamilton's equations of motion state that the time evolution of the particle will obey
\begin{align}
    \frac{\mathrm{d}}{\mathrm{d}t} \tilde{z} = \mathbb{J}_\text{can} ~\mathbf{D}\tilde{H}(\tilde{z})
\end{align}
By the chain rule $\mathbf{D}\tilde{H}(\tilde{z}) = \mathbf{D} H(\mathbf{B}\tilde{z}) = \mathbf{B}^\top \nabla_z H(z)$ where we have used that $z = \mathbf{B}\tilde{z}$. Hence,
\begin{align}
    \frac{\mathrm{d}}{\mathrm{d}t} \tilde{z} = \mathbb{J}_\text{can}\mathbf{B}^\top \nabla_z H(z)
\end{align}
Now by substitution via $\tilde{z} = \mathbf{F}z$ we find
\begin{align}
    &\frac{\mathrm{d}}{\mathrm{d}t} \mathbf{F}z = \mathbb{J}_\text{can}\mathbf{B}^\top \nabla_z H(z) \\
    \implies& \frac{\mathrm{d}}{\mathrm{d}t} z = \mathbf{B}\mathbb{J}_\text{can}\mathbf{B}^\top \nabla_z H(z)
\end{align}
whereupon the identification (by non-degeneracy of the symplectic form) of $\mathbb{B} = \mathbf{B}\mathbb{J}_\text{can}\mathbf{B}^\top$ shows that the particle evolution in canonical coordinates is identical to the motion in non-canonical coordinates; that is, we have indeed shown $\dot{z} = \mathbb{B}~\mathbf{D}H(z)$.
\end{proof}

\section{Symplectic Gram-Schmidt}\label{app:symplectic-gram-schmidt}

The symplectic Gram-Schmidt procedure we use in \cref{alg:symplectic-gram-schmidt} is modified from \citep{gole2001symplectic}.

\begin{algorithm}[ht!]
\caption{A modified version of the symplectic Gram-Schmidt procedure such that, at termination, it is evident how the basis vectors must be rearranged in order to partition the canonical coordinates into state and momentum variables.}
\label{alg:symplectic-gram-schmidt}
\begin{algorithmic}[1]
\State \textbf{Input}: Symplectic form $\Omega$ with matrix $\mathbb{J}\in\text{Skew}(2n)$ in non-canonical coordinates.
\State \textbf{Output}: Basis $\mathbf{B}$ in which the matrix of $\Omega$ assumes canonical form.
\State $\mathbf{B} = \varnothing$
\For{$i = 1,\ldots, n$}
    \State $w, v\overset{\text{i.i.d.}}{\sim}\text{Normal}(\mathbf{0}, \text{Id})$ such that $w, v \in\R^{2n}$.
    \If{$\mathbf{B}$ is not empty}
        \State Project $v$ and $w$ into the orthogonal complement of the subspace spanned by the columns of $\mathbb{J}\mathbf{B}$.
    \EndIf
    \State $O = \Omega(v, w) = v^\top\mathbb{J}w$. Compute $v' = (v\cdot \text{sign}(O)) / \sqrt{\vert O\vert}$ and $w' = w / \sqrt{\vert O\vert}$.
    \State Add $v'$ and $w'$ as columns of $\mathbf{B}$.
\EndFor    
\State Shuffle the columns of $\mathbf{B}$ (using zero-based indexing) according to the permutation $\set{0, 2, \ldots, 2n-2, 1, 3, \ldots, 2n-1}$.
\State \textbf{Return}: $\mathbf{B}$.
\end{algorithmic}
\end{algorithm}

We note that the algorithm is randomized and the output is not unique. In practice one may run symplectic Gram-Schmidt multiple times and choose and choose a basis with the smallest Frobenius norm.

\section{Explicit Integration Scheme}\label{app:explicit-integration-scheme}

This integration strategy is due to \citep{PhysRevE.94.043303}. The objective is to integrate Hamiltonian dynamics whose behavior in canonical coordinates is given by a non-separable Hamiltonian $H(\tilde{\mathbf{q}}, \tilde{\mathbf{p}})$. The idea to create an augmented phase-space with additional position variables $\tilde{\mathbf{x}}$ and momentum variables $\tilde{\mathbf{y}}$ and to define an expanded Hamiltonian in the expanded phase-space:
\begin{align}\label{eq:expanded-hamiltonian}
    \hat{H}(\tilde{\mathbf{q}}, \tilde{\mathbf{p}}, \tilde{\mathbf{x}}, \tilde{\mathbf{y}}) = H(\tilde{\mathbf{q}},\tilde{\mathbf{y}}) + H(\tilde{\mathbf{x}}, \tilde{\mathbf{p}}) + \frac{\omega}{2}\paren{\Vert \tilde{\mathbf{q}}-\tilde{\mathbf{x}}\Vert_2^2 + \Vert \tilde{\mathbf{p}}-\tilde{\mathbf{y}}\Vert_2^2}
\end{align}
where $\omega>0$ is the binding term that encourages $\tilde{\mathbf{q}}$ and $\tilde{\mathbf{x}}$ to be close, as well as $\tilde{\mathbf{p}}$ and $\tilde{\mathbf{y}}$ to be close. This binding parameter is a hyperparameter of the method. By splitting this Hamiltonian and considering constituent flows for a discretization of time $\epsilon > 0$, \citep{PhysRevE.94.043303} defines the following component integrators:
\begin{align}
    \Phi_1^{\epsilon} : (\tilde{\mathbf{q}}, \tilde{\mathbf{p}}, \tilde{\mathbf{x}}, \tilde{\mathbf{y}})^\top \to (\tilde{\mathbf{q}}, \tilde{\mathbf{p}} - \epsilon\nabla_{\tilde{\mathbf{q}}} H(\tilde{\mathbf{q}}, \tilde{\mathbf{y}}), \tilde{\mathbf{x}} + \epsilon\nabla_{\tilde{\mathbf{y}}} H(\tilde{\mathbf{x}}, \tilde{\mathbf{y}}), \tilde{\mathbf{y}})^\top \label{eq:integrator-one} \\
    \Phi_2^{\epsilon} : (\tilde{\mathbf{q}}, \tilde{\mathbf{p}}, \tilde{\mathbf{x}}, \tilde{\mathbf{y}})^\top \to (\tilde{\mathbf{q}} + \epsilon\nabla_{\tilde{\mathbf{p}}} H(\tilde{\mathbf{x}}, \tilde{\mathbf{p}}), \tilde{\mathbf{p}}, \tilde{\mathbf{x}}, \tilde{\mathbf{y}} - \epsilon \nabla_{\tilde{\mathbf{x}}} H(\tilde{\mathbf{x}}, \tilde{\mathbf{p}}))^\top \label{eq:integrator-two} \\
    \Phi_3^{\epsilon, \omega}: (\tilde{\mathbf{q}}, \tilde{\mathbf{p}}, \tilde{\mathbf{x}}, \tilde{\mathbf{y}})^\top \to \frac{1}{2}\begin{pmatrix} \begin{pmatrix} \tilde{\mathbf{q}} + \tilde{\mathbf{x}} \\ \tilde{\mathbf{p}} + \tilde{\mathbf{y}} \end{pmatrix} + \mathbf{R}(\omega,\delta) \begin{pmatrix} \tilde{\mathbf{q}}-\tilde{\mathbf{x}} \\\tilde{\mathbf{p}}-\tilde{\mathbf{y}} \end{pmatrix} \\ \begin{pmatrix} \tilde{\mathbf{q}} + \tilde{\mathbf{x}} \\ \tilde{\mathbf{p}} + \tilde{\mathbf{y}} \end{pmatrix} - \mathbf{R}(\omega,\delta) \begin{pmatrix} \tilde{\mathbf{q}}-\tilde{\mathbf{x}} \\\tilde{\mathbf{p}}-\tilde{\mathbf{y}} \end{pmatrix}
    \end{pmatrix}\label{eq:integrator-three}
\end{align}
where
\begin{align}
    \mathbf{R}(\omega,\delta) = \begin{pmatrix} \cos(2\epsilon \omega)~\text{Id} & \sin(2\epsilon \omega)~\text{Id} \\
    -\sin(2\epsilon \omega)~\text{Id} & \cos(2\epsilon \omega)~\text{Id}\end{pmatrix}.
\end{align}
The symplectic, symmetric, second-order accurate integrator is then given by the composition of these flows as
\begin{align}
    \Phi^{\epsilon} = \Phi^{\epsilon/2}_1 \circ \Phi^{\epsilon/2}_2 \circ \Phi^{\epsilon, \omega}_3 \circ \Phi^{\epsilon/2}_2  \circ \Phi^{\epsilon/2}_1.
\end{align}
Applying Hamilton's canonical equations of motion to the Hamiltonian $\hat{H}(\tilde{\mathbf{q}},\tilde{\mathbf{p}},\tilde{\mathbf{x}},\tilde{\mathbf{y}})$ yields the following system of differential equations.
\begin{align}
    \frac{\d{}}{\d{t}} \tilde{\mathbf{q}} &= \nabla_{\tilde{\mathbf{p}}} H(\tilde{\mathbf{x}},\tilde{\mathbf{p}}) + \omega(\tilde{\mathbf{p}}-\tilde{\mathbf{y}}) \\
    \frac{\d{}}{\d{t}} \tilde{\mathbf{p}} &= -\nabla_{\tilde{\mathbf{q}}} H(\tilde{\mathbf{q}},\tilde{\mathbf{y}}) - \omega(\tilde{\mathbf{q}}-\tilde{\mathbf{x}}) \\
    \frac{\d{}}{\d{t}} \tilde{\mathbf{x}} &= \nabla_{\tilde{\mathbf{y}}} H(\tilde{\mathbf{q}},\tilde{\mathbf{y}}) + \omega(\tilde{\mathbf{y}}-\tilde{\mathbf{p}}) \\
    \frac{\d{}}{\d{t}} \tilde{\mathbf{y}} &= -\nabla_{\tilde{\mathbf{x}}} H(\tilde{\mathbf{x}},\tilde{\mathbf{p}}) - \omega(\tilde{\mathbf{x}}-\tilde{\mathbf{q}}) \\
\end{align}
whereupon the technique of Strang splitting yields the integrators in \cref{eq:integrator-one}, \cref{eq:integrator-two}, and \cref{eq:integrator-three}.

In our discussion of detailed balance we will require the following result from \citep{PhysRevE.94.043303}.
\begin{theorem}\label{thm:solution-binding}
Let $(\tilde{\tilde{\mathbf{q}}}', \tilde{\mathbf{p}}', \tilde{\mathbf{x}}', \tilde{\mathbf{y}}')$ be the destination produced by the explicit integrator with binding parameter $\omega$ given an initial position $(\tilde{\mathbf{q}}, \tilde{\mathbf{p}},\tilde{\mathbf{q}},\tilde{\mathbf{p}})$. Then $\Vert \tilde{\mathbf{q}}'-\tilde{\mathbf{x}}'\Vert = \mathcal{O}(1/\sqrt{\omega})$ and  $\Vert \tilde{\mathbf{p}}'-\tilde{\mathbf{y}}'\Vert = \mathcal{O}(1/\sqrt{\omega})$.
\end{theorem}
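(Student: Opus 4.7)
The plan is to leverage approximate conservation of the expanded Hamiltonian $\hat{H}$ given in \cref{eq:expanded-hamiltonian} under the proposed integrator. Because $\Phi^\epsilon = \Phi_1^{\epsilon/2}\circ \Phi_2^{\epsilon/2}\circ \Phi_3^{\epsilon,\omega}\circ \Phi_2^{\epsilon/2}\circ \Phi_1^{\epsilon/2}$ is a Strang splitting whose constituents are exact sub-flows (in particular $\Phi_3^{\epsilon,\omega}$ is the exact flow of the oscillatory binding term), it is symplectic, symmetric, and second-order accurate. Standard backward error analysis then implies that along the discrete trajectory $\hat{H}$ is conserved up to an $\mathcal{O}(\epsilon^2)$ residual over bounded integration times, and crucially this residual does not inflate with $\omega$ because the stiff oscillation is resolved exactly by $\Phi_3^{\epsilon,\omega}$ rather than split against itself.

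First I would evaluate $\hat{H}$ at the prescribed initial condition $(\tilde{\mathbf{q}}_0,\tilde{\mathbf{p}}_0,\tilde{\mathbf{q}}_0,\tilde{\mathbf{p}}_0)$. Because the two copies coincide, the binding term $\tfrac{\omega}{2}(\|\tilde{\mathbf{q}}-\tilde{\mathbf{x}}\|_2^2 + \|\tilde{\mathbf{p}}-\tilde{\mathbf{y}}\|_2^2)$ vanishes, and we obtain $\hat{H}(\text{initial}) = 2\tilde{H}(\tilde{\mathbf{q}}_0,\tilde{\mathbf{p}}_0)$, a quantity that is independent of $\omega$.

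Next, at the terminal point $(\tilde{\mathbf{q}}', \tilde{\mathbf{p}}', \tilde{\mathbf{x}}', \tilde{\mathbf{y}}')$, approximate conservation combined with the decomposition of $\hat{H}$ in \cref{eq:expanded-hamiltonian} yields
\begin{equation*}
\tfrac{\omega}{2}\bigl(\|\tilde{\mathbf{q}}'-\tilde{\mathbf{x}}'\|_2^2 + \|\tilde{\mathbf{p}}'-\tilde{\mathbf{y}}'\|_2^2\bigr) = 2\tilde{H}(\tilde{\mathbf{q}}_0,\tilde{\mathbf{p}}_0) - \tilde{H}(\tilde{\mathbf{q}}',\tilde{\mathbf{y}}') - \tilde{H}(\tilde{\mathbf{x}}',\tilde{\mathbf{p}}') + \mathcal{O}(\epsilon^2).
\end{equation*}
Under the standard HMC assumption that $\tilde{H}$ is bounded below and that the trajectory remains in a region on which $\tilde{H}$ is bounded above by a constant independent of $\omega$, the right-hand side is $\mathcal{O}(1)$ in $\omega$. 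Dividing by $\omega/2$ and taking square roots produces the claimed $\mathcal{O}(1/\sqrt{\omega})$ bounds on both $\|\tilde{\mathbf{q}}'-\tilde{\mathbf{x}}'\|$ and $\|\tilde{\mathbf{p}}'-\tilde{\mathbf{y}}'\|$.

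The main obstacle is making the uniform-in-$\omega$ boundedness of $\hat{H}$ along the discrete trajectory fully rigorous. Two subtleties must be addressed: backward error analysis of Strang splittings typically produces constants that depend on derivatives of the split vector fields, so one must verify that handling the stiff binding component by its exact flow (rather than splitting it against a slow piece) prevents the $\omega$-dependence from entering these constants; and one must separately justify that the two copies stay in a compact region on which $\tilde{H}$ itself is bounded, independent of $\omega$, which is precisely the content that \citep{PhysRevE.94.043303} establishes for this integrator and which I would cite rather than re-derive.
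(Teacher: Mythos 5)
The paper does not actually prove this statement; \cref{thm:solution-binding} is imported verbatim from \citep{PhysRevE.94.043303} and used as a black box, so there is no in-paper argument to compare yours against. Your energy-based route is the standard one (and essentially the one underlying the cited reference): the binding term of $\hat{H}$ in \cref{eq:expanded-hamiltonian} vanishes at the doubled initial condition, so near-conservation of $\hat{H}$ forces $\tfrac{\omega}{2}\bigl(\Vert\tilde{\mathbf{q}}'-\tilde{\mathbf{x}}'\Vert^2+\Vert\tilde{\mathbf{p}}'-\tilde{\mathbf{y}}'\Vert^2\bigr)$ to stay $\mathcal{O}(1)$, provided $\tilde{H}$ is bounded below along both copies of the trajectory. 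That skeleton is sound, and you correctly identify the two places where rigor is needed.

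The one step I would push back on is your assertion that the conservation residual ``does not inflate with $\omega$ because the stiff oscillation is resolved exactly by $\Phi_3^{\epsilon,\omega}$.'' Resolving the stiff piece exactly within its own sub-flow does not remove the commutators between the $\omega$-scaled binding vector field and the slow pieces from the Baker--Campbell--Hausdorff expansion, so generic backward error analysis still yields constants carrying powers of $\omega$; indeed the paper's own \cref{thm:solution-error} quotes an error bound with an explicit factor of $\omega$, which contradicts a naive uniform-in-$\omega$ claim. The conclusion survives nonetheless: if the energy drift over the integration window is some $\delta(\epsilon,\omega)$, your final bound becomes $\mathcal{O}\bigl(\sqrt{(C+\delta)/\omega}\bigr)$, which is still $\mathcal{O}(1/\sqrt{\omega})$ whenever $\delta=o(\omega)$ --- a much weaker requirement than uniform boundedness, and one that is consistent with the quoted error estimate. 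So the argument closes, but via this weaker drift condition (or by citing \citep{PhysRevE.94.043303} for the drift control, as you propose), not via the uniformity claim as stated.
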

Moreover we note that the numerical error of the integrator satisfies the following bound from \citep{PhysRevE.94.043303} 
\begin{theorem}\label{thm:solution-error}
The error of the integrator is $\mathcal{O}(\min\set{\epsilon^{-2} \cdot \omega^{-1}, \sqrt{\omega}}\cdot \epsilon^{2} \cdot \omega)$ until time $\min\set{\epsilon^{-2} \cdot \omega^{-1}, \sqrt{\omega}}$.
\end{theorem}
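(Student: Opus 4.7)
The plan is to combine backward error analysis for the symmetric, symplectic Strang-split integrator with the binding bound already established in \cref{thm:solution-binding}. The total error between $\Phi^\epsilon$ iterated $N$ times and the exact flow of the original non-separable Hamiltonian $\tilde H(\tilde{\mathbf{q}},\tilde{\mathbf{p}})$ decomposes into two pieces: (a) a \emph{discretization error} from replacing the exact flow of the extended Hamiltonian $\hat H$ of \cref{eq:expanded-hamiltonian} by the palindromic composition $\Phi^\epsilon = \Phi_1^{\epsilon/2}\circ\Phi_2^{\epsilon/2}\circ\Phi_3^{\epsilon,\omega}\circ\Phi_2^{\epsilon/2}\circ\Phi_1^{\epsilon/2}$, and (b) a \emph{binding error} coming from the fact that the $\hat H$-trajectory only approximately enforces $(\tilde{\mathbf{q}},\tilde{\mathbf{p}})=(\tilde{\mathbf{x}},\tilde{\mathbf{y}})$, which is what would recover the true $\tilde H$-dynamics after projection.

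For (a), I would label the three summands of $\hat H$ as $H_1=\tilde H(\tilde{\mathbf{q}},\tilde{\mathbf{y}})$, $H_2=\tilde H(\tilde{\mathbf{x}},\tilde{\mathbf{p}})$, and $\omega B = \tfrac{\omega}{2}(\Vert\tilde{\mathbf{q}}-\tilde{\mathbf{x}}\Vert^2+\Vert\tilde{\mathbf{p}}-\tilde{\mathbf{y}}\Vert^2)$, whose Hamiltonian flows are $\Phi_1^\epsilon,\Phi_2^\epsilon,\Phi_3^{\epsilon,\omega}$ respectively. The Baker–Campbell–Hausdorff formula for the composition of Lie operators and the symmetry of the splitting eliminate all odd-order terms, yielding a modified Hamiltonian $\hat H_\epsilon = \hat H + \epsilon^2 R_2 + \mathcal{O}(\epsilon^4)$ exactly preserved by $\Phi^\epsilon$. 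The remainder $R_2$ is a finite linear combination of double Poisson brackets of $H_1, H_2, \omega B$, and since at second order the factor $\omega$ enters at most once and $B$ itself is only quadratic, $\Vert R_2\Vert = \mathcal{O}(\omega)$ on bounded regions of the extended phase-space. A standard Gronwall argument comparing the exact flow of $\hat H$ to the exact flow of $\hat H_\epsilon$ then promotes the per-step local error $\mathcal{O}(\epsilon^3 \omega)$ to the global state error $\mathcal{O}(T\epsilon^2\omega)$ over integration time $T$.

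For (b), I would invoke \cref{thm:solution-binding}, which gives $\Vert\tilde{\mathbf{q}}-\tilde{\mathbf{x}}\Vert + \Vert\tilde{\mathbf{p}}-\tilde{\mathbf{y}}\Vert = \mathcal{O}(1/\sqrt{\omega})$ along trajectories initialized from $(\tilde{\mathbf{q}}_0,\tilde{\mathbf{p}}_0,\tilde{\mathbf{q}}_0,\tilde{\mathbf{p}}_0)$. Substituting into the $\hat H$-equations and subtracting the target $\tilde H$-equations shows the instantaneous mismatch in the vector field is $\mathcal{O}(1/\sqrt{\omega})$, and Gronwall produces an accumulated drift of order $T/\sqrt{\omega}$. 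Requiring this drift to remain controlled demands $T\lesssim \sqrt{\omega}$, producing the second branch of the minimum. Combining (a) and (b) and restricting $T\le T^\star \defeq \min\{\epsilon^{-2}\omega^{-1},\sqrt{\omega}\}$, the discretization term dominates and one obtains the claimed bound $\mathcal{O}(T^\star \epsilon^2 \omega)$.

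The main obstacle is tracking the $\omega$-scaling cleanly through the BCH expansion: I must verify that each iterated bracket containing the binding term is balanced by a compensating factor of $1/\sqrt{\omega}$ coming from the actual size of $\tilde{\mathbf{q}}-\tilde{\mathbf{x}}$ and $\tilde{\mathbf{p}}-\tilde{\mathbf{y}}$ along the true trajectory, so that the effective norm of $R_2$ is only $\mathcal{O}(\omega)$ rather than a higher power. A secondary subtlety is that the bounds tacitly assume trajectories stay in a compact region where $\tilde H$ and its higher derivatives are uniformly bounded; closing the argument therefore requires a bootstrap step confirming that $\hat H$-conservation (up to the backward-error-modified Hamiltonian) prevents the orbit from escaping inside the time window $T^\star$.
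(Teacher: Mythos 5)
The paper itself does not prove this statement: \cref{thm:solution-error} is imported verbatim from \citep{PhysRevE.94.043303}, so the only benchmark is whether your sketch would reconstruct that result, and as written it has two genuine gaps. First, your part (b) rests on a misreading of the role of the binding term. The diagonal $\{(\tilde{\mathbf{q}},\tilde{\mathbf{p}},\tilde{\mathbf{q}},\tilde{\mathbf{p}})\}$ is \emph{exactly} invariant under the exact flow of $\hat{H}$ in \cref{eq:expanded-hamiltonian}: on the diagonal the binding forces vanish and both copies evolve by the original $\tilde{H}$ vector field, so the exact augmented flow started from $(\tilde{\mathbf{q}}_0,\tilde{\mathbf{p}}_0,\tilde{\mathbf{q}}_0,\tilde{\mathbf{p}}_0)$ reproduces the $\tilde{H}$-flow with zero ``binding error''; \cref{thm:solution-binding} is a statement about the \emph{numerical} trajectory, not the exact one. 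This is not a harmless overestimate: the $\mathcal{O}(T/\sqrt{\omega})$ drift you introduce in (b) is not dominated by the discretization term $\mathcal{O}(T\epsilon^{2}\omega)$ in general (take $\omega$ fixed and $\epsilon\to 0$), so your closing step ``the discretization term dominates'' is unjustified and your decomposition does not deliver the claimed bound $\mathcal{O}(T^\star\epsilon^{2}\omega)$. The correct decomposition puts the entire error into your part (a), with the off-diagonal deviation entering only through the error constants of the backward error analysis.

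Second, the estimate $\Vert R_2\Vert=\mathcal{O}(\omega)$ does not follow from ``the factor $\omega$ enters at most once'': at order $\epsilon^{2}$ the BCH remainder contains double brackets in which the binding term appears twice, e.g.\ $\{\omega B,\{\omega B,H_1\}\}$, which scales like $\omega^{2}$ on a generic bounded region and like $\omega^{3/2}$ even within an $\mathcal{O}(1/\sqrt{\omega})$ neighborhood of the diagonal (since $\nabla B$ is linear in the off-diagonal deviation while $\nabla^2 B$ is constant). Taming these terms --- using that $\Phi_3^{\epsilon,\omega}$ is the \emph{exact} flow of the binding part and that the numerical orbit stays $\mathcal{O}(1/\sqrt{\omega})$ from the diagonal, which is also the origin of the $\sqrt{\omega}$ cap on the time window in the statement (the $\epsilon^{-2}\omega^{-1}$ branch merely keeps the accumulated error $t\,\epsilon^{2}\omega$ of order one) --- is exactly the nontrivial content of the argument in \citep{PhysRevE.94.043303}. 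You flag this $\omega$-bookkeeping as the main obstacle but leave it unresolved, and the bootstrap keeping the orbit in a compact set is likewise only announced; so the proposal is a plausible roadmap toward the cited proof rather than a proof, and in its present form the stated bound does not follow from it.
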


\section{Remarks on the Proposal Operator}\label{app:remarks-on-proposal-operator}

We note that in order to have a correct MCMC procedure it is necessary to be able to give a reversible transition operator for the dynamics. We will now develop such a procedure in an asymptotic sense. 

Suppose we have a Hamiltonian $H(\mathbf{q},\mathbf{p})$ in non-canonical coordinates. Our objective is to sample from the distribution $p(\mathbf{q},\mathbf{p})\propto \exp(-H(\mathbf{q},\mathbf{p}))$. To achieve this we will instead derive a Markov chain targeting $p(\mathbf{q},\mathbf{p}, \epsilon) \propto \exp(-H(\mathbf{q},\mathbf{p})) \cdot \frac{1}{2} \cdot \mathbf{1}\set{\epsilon\in\set{-\epsilon^*, +\epsilon^*}}$, where $\epsilon$ is the integration step-size. We will generate samples from $p(\mathbf{q},\mathbf{p})$ by projecting samples from $p(\mathbf{q},\mathbf{p}, \epsilon)$ to their $(\mathbf{q},\mathbf{p})$ marginals.

Let $(\mathbf{q},\mathbf{p})$ be our position in phase-space. Define the doubling and halving operators, respectively, by $\mathbb{D} : (\mathbf{q},\mathbf{p}, \epsilon) \mapsto (\mathbf{q},\mathbf{p}, \mathbf{q},\mathbf{p}, \epsilon)$ and $\mathbb{H} : (\mathbf{q},\mathbf{p}, \mathbf{x},\mathbf{y}, \epsilon) \mapsto (\mathbf{q},\mathbf{p}, \epsilon)$. By \cref{thm:darboux-theorem}, we may find a basis $\mathbf{B}$ and change-of-basis matrix $\mathbf{F}=\mathbf{B}^{-1}$ in which the non-canonical symplectic structure assumes the canonical form. Our development now requires us to further augment the expanded phase-space via the introduction of the integration step-size  $\epsilon$. Applying the change-of-basis operation we set
\begin{align}
    \begin{bmatrix} \tilde{\mathbf{q}}\\\tilde{\mathbf{p}}\\\tilde{\mathbf{x}}\\\tilde{\mathbf{y}}\\\epsilon\end{bmatrix} \defeq \begin{bmatrix} \mathbf{F} &  \\  & \mathbf{F} \\ && 1 \end{bmatrix} \begin{bmatrix} \mathbf{q}\\\mathbf{p}\\\mathbf{x}\\\mathbf{y}\\\epsilon\end{bmatrix}
\end{align}
giving our position in expanded phase-space with respect to canonical coordinates.

We now introduce the integration operator in canonical coordinates $\mathbb{I}_\omega(\tilde{\mathbf{q}},\tilde{\mathbf{p}}, \tilde{\mathbf{x}}, \tilde{\mathbf{y}}, \epsilon) = (\tilde{\mathbf{q}}',\tilde{\mathbf{p}}', \tilde{\mathbf{x}}', \tilde{\mathbf{y}}', \epsilon)$ where $(\tilde{\mathbf{q}}', \tilde{\mathbf{p}}', \tilde{\mathbf{x}}', \tilde{\mathbf{y}}')$ is the output of the explicit integrator introduced in \cref{app:explicit-integration-scheme} applied to the (possibly non-separable) Hamiltonian $\tilde{H}(\tilde{\mathbf{q}},\tilde{\mathbf{p}}) \defeq H(\mathbf{B}(\tilde{\mathbf{q}},\tilde{\mathbf{p}})^\top)$. By passing to the limit $\omega\to+\infty$, we establish by \cref{thm:solution-binding} that $\tilde{\mathbf{q}}'=\tilde{\mathbf{x}}'$ and $\tilde{\mathbf{p}}'=\tilde{\mathbf{y}}'$. (We note that in order to preserve accurate integration of the Hamiltonian, one will require a corresponding decrease in the step-size $\epsilon^* = 1 / \omega$ though this is not strictly necessary for detailed balance to hold; see \cref{thm:solution-error}.)
Further introduce the step-size flip operator $\mathbb{F} : (\tilde{\mathbf{q}},\tilde{\mathbf{p}}, \tilde{\mathbf{x}}, \tilde{\mathbf{y}}, \epsilon) \mapsto (\tilde{\mathbf{q}},\tilde{\mathbf{p}}, \tilde{\mathbf{x}}, \tilde{\mathbf{y}}, -\epsilon)$, which conserves energy and volume and leaves the marginal distribution of $(\mathbf{q},\mathbf{p},\mathbf{x},\mathbf{y})$ invariant. Defining $\mathbb{Q} \defeq \mathbb{F} \circ \mathbb{I}_\omega$, it is evident from symmetry of the explicit integrator that $\mathbb{Q} \circ \mathbb{Q} = \text{Id}$. We now give the full transition operator which starts and ends in non-canonical coordinates (with step-size augmentation):
\begin{align}
    \mathbb{T} \defeq \mathbb{H}\circ \begin{bmatrix} \mathbf{B} &  \\  & \mathbf{B} \\ && 1 \end{bmatrix} \circ \mathbb{Q} \circ \begin{bmatrix} \mathbf{F} &  \\  & \mathbf{F} \\ && 1 \end{bmatrix} \circ \mathbb{D}
\end{align}
which acts as
\begin{align}
    \mathbb{T} : (\mathbf{q},\mathbf{p},\epsilon) \mapsto (\mathbf{q}',\mathbf{p}',-\epsilon)
\end{align}
where
\begin{align}
    \begin{bmatrix} \mathbf{q}'\\\mathbf{p}'\\\mathbf{x}'\\\mathbf{y}'\end{bmatrix} \defeq \begin{bmatrix} \mathbf{B} &  \\  & \mathbf{B} \end{bmatrix} \begin{bmatrix} \tilde{\mathbf{q}}'\\\tilde{\mathbf{p}}'\\\tilde{\mathbf{x}}'\\\tilde{\mathbf{p}}'\end{bmatrix}.
\end{align}
It is easily checked that $\mathbb{T}$ is symmetric.
Note that $\mathbb{T}$ also preserves volume which follows from the fact that $\mathbb{Q}$ is symplectic and the stretching of the expanded phase-space introduced by the change-of-basis to canonical coordinates is undone by the change-of-basis back to non-canonical coordinates. The volumetric expansion and retraction of phase-space given by the operators $\mathbb{D}$ and $\mathbb{H}$ also cancel each other.

Let $R$ be a region of phase-space. Let $R$ be sufficiently small that the value of the acceptance Hamiltonian $G(\mathbf{q},\mathbf{p}) \defeq H(\mathbf{q},\mathbf{p})$ is constant on $R$ with value $G(R)$ and suppose the volume of $R$ is $\Delta$. Let $R'$ be the image of $R$ under $\mathbb{H}$ with step-size $+\epsilon^*$, which has constant Hamiltonian $H(R')$. We obtain the probability of transitioning from $R$ to $R'$:
\begin{align}
    &\int_{R'} \int_{R} \frac{\exp(-H(\alpha))}{Z} \cdot \text{Pr}\left[\alpha' = \mathbb{T}(\alpha, \epsilon^*), \epsilon = +\epsilon^*\right] \d{\alpha'}\d{\alpha} \\ 
    &\qquad = \frac{e^{-H(R)}}{Z} \cdot\Delta\cdot\frac{1}{2}\cdot \min\set{1, e^{H(R') - H(R)}} \\
    &\qquad = \frac{e^{-H(R')}}{Z} \cdot\Delta\cdot\frac{1}{2}\cdot \min\set{1, e^{H(R) - H(R')}} \\
    &\qquad = \int_{R} \int_{R'} \frac{\exp(-H(\alpha'))}{Z}\cdot \text{Pr}\left[\alpha = \mathbb{T}(\alpha', -\epsilon^*), \epsilon = -\epsilon^*\right] \d{\alpha} \d{\alpha'}.
\end{align}
where we have used the shorthand notation $\alpha=(\mathbf{q},\mathbf{p},\mathbf{x},\mathbf{y})$ and $\alpha'=(\mathbf{q}',\mathbf{p}',\mathbf{x}',\mathbf{y}')$. This establishes detailed balance. Samples generated by this procedure, which are subsequently projected to the $(\mathbf{q},\mathbf{p})$ variables only, targets the distribution $p(\mathbf{q},\mathbf{p})\propto \exp(-H(\mathbf{q},\mathbf{p}))$.

Crucially, because $(\tilde{\mathbf{q}}', \tilde{\mathbf{p}}')$ is an approximate destination for Hamilton's equations of motion with Hamiltonian $\tilde{H}(\tilde{\mathbf{q}}, \tilde{\mathbf{p}})$ and initial condition $(\tilde{\mathbf{q}}, \tilde{\mathbf{p}})$, we expect that $\tilde{H}$ will be approximately conserved. Moreover, by \cref{thm:non-canonical-dynamics-darboux} we have that $(\mathbf{q}',\mathbf{p}')$ is an approximate destination for the non-canonical Hamiltonian dynamics with Hamiltonian $H(\mathbf{q},\mathbf{p})$ and initial condition $(\mathbf{q},\mathbf{p})$. It is for this reason that we expect the acceptance probability of the Markov chain to be high.

The question of detailed balance in the case of finite $\omega$ remains open. In our experiments, reversibility of explicit integator is not exact, instead only approximating true reversibility. Hence the use of this explicit integrator can be used as an approximation Hamiltonian Monte Carlo with a reversible integrator. For very small step-sizes, the explicit integrator will exhibit near-reversibility in the sense that, at the end of the trajectory, $\tilde{\mathbf{q}}' \approx\tilde{\mathbf{x}}'$ and $\tilde{\mathbf{p}}' \approx \tilde{\mathbf{y}}'$.

Of course it would be desirable to prove that the Markov chain we have prescribed is actually ergodic, which would require proofs of irreducibility and aperiodicity. Unfortunately, such proofs are not trivial even in the case of canonical Hamiltonian dynamics; refer to \citep{livingstone2016geometric}. We reserve investigation of these properties for future work and henceforth assume that the initial sample is drawn from $p(\mathbf{q},\mathbf{p})$, for which the reversible Markov chain leaves the distribution invariant. If one is concerned about ergodicity, one may instead enter the target distribution using a provably ergodic algorithm (such a Metropolis-adjusted Langevin) and then transition to non-canonical HMC.

\begin{figure*}
    \centering
    \begin{subfigure}[t]{0.2\textwidth}
        \centering
        \includegraphics[width=\textwidth]{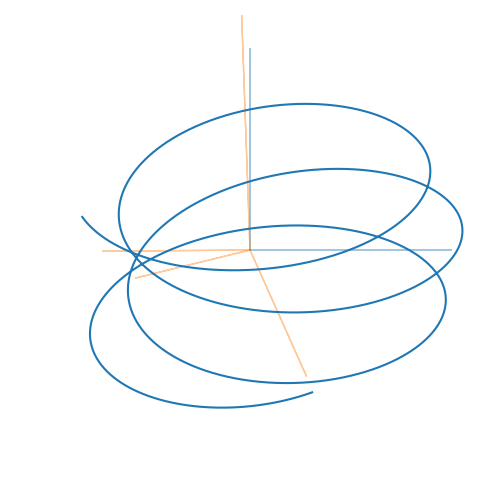}
        \caption{Non-canonical Position}
    \end{subfigure}
    ~
    \begin{subfigure}[t]{0.2\textwidth}
        \centering
        \includegraphics[width=\textwidth]{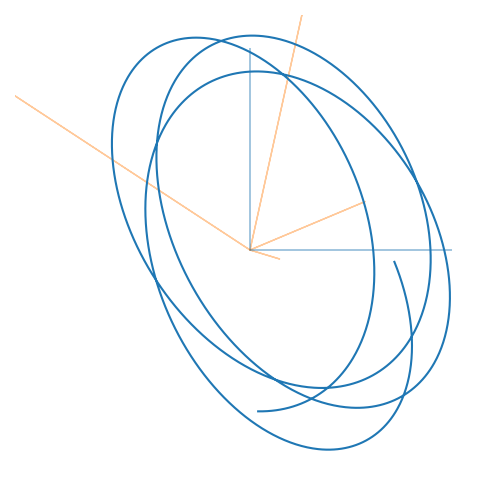}
        \caption{Non-canonical Momentum}
    \end{subfigure}
    ~
    \begin{subfigure}[t]{0.1\textwidth}
    \vspace{-2cm}
    \begin{align*}
        \overset{\tilde{z} = \mathbf{F}z}{\implies} \\
        \underset{z = \mathbf{B}\tilde{z}}{\impliedby}
    \end{align*}
    \end{subfigure}
    ~
    \begin{subfigure}[t]{0.2\textwidth}
        \centering
        \includegraphics[width=\textwidth]{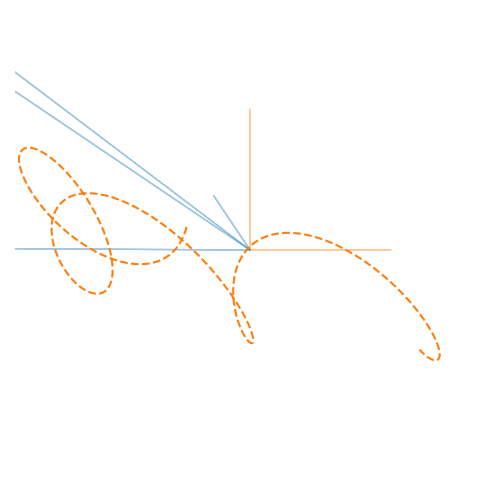}
        \caption{Canonical Position}
    \end{subfigure}
    ~
    \begin{subfigure}[t]{0.2\textwidth}
        \centering
        \includegraphics[width=\textwidth]{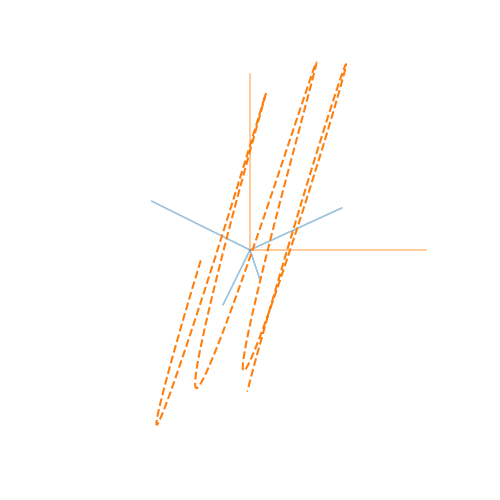}
        \caption{Canonical Momentum}
    \end{subfigure}
    \caption{Visualization of the transformation from non-canonical coordinates to canonical ones via a symplectic basis. In either coordinate system, the motion may be transformed via a linear change-of-basis into the motion of the other coordinate system. The advantage of the symplectic basis is that it is amenable to an explicit integration procedure. Non-canonical motion and its basis are shown in blue while canonical motion and its basis are shown in orange. Basis vectors of the canonical (non-canonical, resp.) coordinate system are shown relative to the non-canonical (canonical) coordinates by light orange (blue) vectors.}
    \label{fig:change-of-basis}
\end{figure*}

\section{Relevance of Binding Strength Parameter}\label{app:binding-strength-parameter}

Given a Hamiltonian and a constant symplectic structure, either the midpoint procedure or the explicit algorithm should be able to integrate the corresponding dynamics (in a symplectic basis in the case of the explicit integrator). However, the explicit integrator introduces an augmented phase-space with additional position and momentum variables; a symplectic integrator is created by tying two solutions, one in $(\mathbf{q}, \mathbf{p})$-space and another in $(\mathbf{x}, \mathbf{y})$-space, together via a binding term whose importance is modulated by $\omega >0$. If the two integrators are correct, we expect the samples generated by an HMC algorithm using either integrator to be close if not identical. By explicitly controlling pseudo-random number generation, we able to examine this property as a function of integration step-size and binding strength.

\begin{figure}[ht!]
    \centering
    \includegraphics[width=0.5\textwidth]{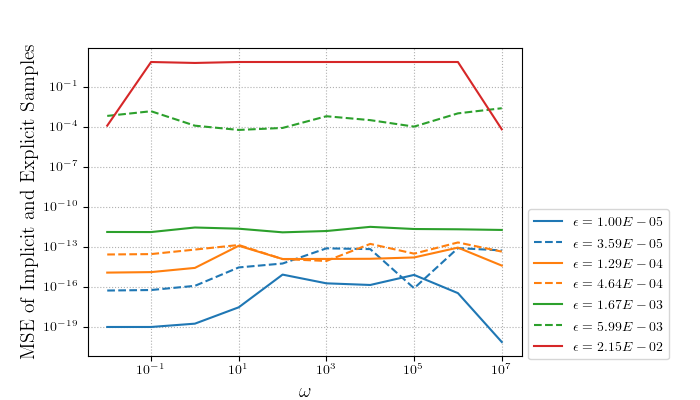}
    \caption{We study the relationship between the explicit integrator and the implicit midpoint integrator of the same system. The explicit procedure depends on the binding strength $\omega>0$; we examine the agreement of the explicit and implicit algorithm across several orders of magnitude in $\omega$. Because the integrated system is the same, including sampled non-canonical momenta, we expect samples generated from HMC using either integrator to be nearly identical. We draw one-thousand samples from a bivariate Gaussian mixture using varying integration step sizes $\epsilon$ as a function of binding strength. We fix the number of integration steps per-sample to also equal one-thousand.}
    \label{fig:omega-err}
\end{figure}

To study this relationship we consider drawing samples from a bivariate Gaussian mixture with unit diagonal variance components and centers at $(2.5, -2.5)$ and $(-2.5, 2.5)$. We consider a non-canonical dynamics produced by drawing standard normal entries of a $4\times 4$ matrix $\mathbf{G}$ and then skew-symmetrizing by the operation $\mathbf{G} \to (\mathbf{G} - \mathbf{G}^\top) / 2$. The relationship between the step-size and $\omega$ is shown in \cref{fig:omega-err}. We find that samples generated by either integrator are within agreement over several orders of magnitude in either step-size or binding strength when the number of integration steps is held constant. In our remaining experiments, we set $\omega = 1$.

\section{Reversibility of the Implicit Integrator}\label{app:implicit-reversibility}

{\bf Theorem 4.} Let $\Lambda$ be a Poisson structure with time-reversal Poisson structure $\tilde{\Lambda}$. Suppose $H(\mathbf{q},\mathbf{p}) = U(\mathbf{q}) + \frac{1}{2} \mathbf{p}^\top\mathbf{p}$ is a separable Hamiltonian. The implicit midpoint integrator satisfies detailed balance when combined with a momentum-flip operator and a transition to the time-reversal Poisson structure as described in \cref{thm:time-reversal-poisson}.

\begin{proof}
When using an implicit integrator (see \cref{app:numerical-integration}), it can be shown that reversibility in the presence of a non-canonical Poisson structure is achievable by negating the momentum and subsequently reverting to the time-reversal Poisson structure. We show this fact directly. First observe that the implicit update satisfies the equation for a separable Hamiltonian $H(\mathbf{q},\mathbf{p}) = U(\mathbf{q}) + \frac{1}{2} \mathbf{p}^\top\mathbf{p}$,
\begin{align}
    \begin{bmatrix}\mathbf{q}_1\\\mathbf{p}_1\end{bmatrix} &= \begin{bmatrix}\mathbf{q}_0\\\mathbf{p}_0\end{bmatrix} + \epsilon \cdot X_H\paren{\frac{\mathbf{q}_0 + \mathbf{q}_1}{2},\frac{\mathbf{p}_0 +\mathbf{p}_1}{2}} \\
    &= \begin{bmatrix}\mathbf{q}_0\\\mathbf{p}_0\end{bmatrix} + \epsilon\cdot \begin{bmatrix} \mathbf{E} & \mathbf{F} \\ -\mathbf{F}^\top & \mathbf{G}\end{bmatrix} \begin{bmatrix} \nabla_\mathbf{p} H\paren{\frac{\mathbf{q}_0 + \mathbf{q}_1}{2},\frac{\mathbf{p}_0 +\mathbf{p}_1}{2}} \\ \nabla_\mathbf{q} H\paren{\frac{\mathbf{q}_0 + \mathbf{q}_1}{2},\frac{\mathbf{p}_0 +\mathbf{p}_1}{2}} \end{bmatrix} \\
    &= \begin{bmatrix}\mathbf{q}_0\\\mathbf{p}_0\end{bmatrix} + \epsilon\cdot\begin{bmatrix} \mathbf{E}\nabla_q U\paren{\frac{\mathbf{q}_0 + \mathbf{q}_1}{2}} + \mathbf{F}\paren{\frac{\mathbf{p}_0 +\mathbf{p}_1}{2}} \\
    -\mathbf{F}^\top \nabla_q U\paren{\frac{\mathbf{q}_0 + \mathbf{q}_1}{2}} +\mathbf{G}\paren{\frac{\mathbf{p}_0 +\mathbf{p}_1}{2}} \end{bmatrix}
\end{align}
The claim is that upon reversing the momentum and integrating again with the time-reversal Poisson structure returns to the original position $(\mathbf{q}_0,\mathbf{p}_0)$. To show this, first observe that negating the momentum becomes,
\begin{align}
    \begin{bmatrix}\mathbf{q}_1\\-\mathbf{p}_1\end{bmatrix} = \begin{bmatrix}\mathbf{q}_0\\-\mathbf{p}_0\end{bmatrix} + \epsilon\cdot\begin{bmatrix} \mathbf{E}\nabla_q U\paren{\frac{\mathbf{q}_0 + \mathbf{q}_1}{2}} + \mathbf{F}\paren{\frac{\mathbf{p}_0 +\mathbf{p}_1}{2}} \\
    \mathbf{F}^\top \nabla_q U\paren{\frac{\mathbf{q}_0 + \mathbf{q}_1}{2}} - \mathbf{G}\paren{\frac{\mathbf{p}_0 +\mathbf{p}_1}{2}} \end{bmatrix}
\end{align}
Recall that the time reversal Poisson structure assumes the form,
\begin{align}
    \begin{bmatrix} -\mathbf{E} & \mathbf{F} \\ -\mathbf{F}^\top & -\mathbf{G}\end{bmatrix}.
\end{align}
We now verify that $(\mathbf{q}_0,-\mathbf{p}_0)$ is stationary for the implicit update with the reversed momentum and time-reversal Poisson structure.
\begin{align}
    \begin{bmatrix}\mathbf{q}_2\\ \mathbf{p}_2\end{bmatrix} &=
    \begin{bmatrix}\mathbf{q}_1\\ -\mathbf{p}_1\end{bmatrix} +
    \epsilon\cdot\begin{bmatrix} -\mathbf{E}\nabla_q U\paren{\frac{\mathbf{q}_1 + \mathbf{q}_2}{2}} + \mathbf{F}\paren{\frac{-\mathbf{p}_1 +\mathbf{p}_2}{2}} \\
    -\mathbf{F}^\top \nabla_q U\paren{\frac{\mathbf{q}_1 + \mathbf{q}_2}{2}}  -\mathbf{G}\paren{\frac{-\mathbf{p}_1 +\mathbf{p}_2}{2}} \end{bmatrix}  \\
    =& \begin{bmatrix}\mathbf{q}_0\\-\mathbf{p}_0\end{bmatrix} + \epsilon\cdot\begin{bmatrix} \mathbf{E}\nabla_q U\paren{\frac{\mathbf{q}_0 + \mathbf{q}_1}{2}} + \mathbf{F}\paren{\frac{\mathbf{p}_0 +\mathbf{p}_1}{2}} \\
    \mathbf{F}^\top \nabla_q U\paren{\frac{\mathbf{q}_0 + \mathbf{q}_1}{2}} - \mathbf{G}\paren{\frac{\mathbf{p}_0 +\mathbf{p}_1}{2}} \end{bmatrix} +
    \epsilon\cdot\begin{bmatrix} -\mathbf{E}\nabla_q U\paren{\frac{\mathbf{q}_1 + \mathbf{q}_2}{2}} + \mathbf{F}\paren{\frac{-\mathbf{p}_1 +\mathbf{p}_2}{2}} \\
    -\mathbf{F}^\top \nabla_q U\paren{\frac{\mathbf{q}_1 + \mathbf{q}_2}{2}}  -\mathbf{G}\paren{\frac{-\mathbf{p}_1 +\mathbf{p}_2}{2}} \end{bmatrix}
\end{align}
By inspection, we observe that the choice $\mathbf{q}_2 = \mathbf{q}_0$ and $\mathbf{p}_2 = -\mathbf{p}_0$ solves the implicit relation. We have used the observation that $(-\mathbf{p}_1 - \mathbf{p}_0)/2 = -(\mathbf{p}_1+\mathbf{p}_0)/2$. This establishes symmetry of the operator used for implicit integration. Standard arguments apply to show that detailed balance holds for non-canonical HMC with the implicit midpoint algorithm used as a transition mechanism. One only needs to equip the integrator trajectory with momentum flip and Poisson structure time-reversal operators; or, equivalently, to equip the transition operator with a random choice to integrate with a positive or negative step-size of equal magnitude: $\text{Pr}\left[\epsilon = +\epsilon^*\right] = \text{Pr}\left[\epsilon=-\epsilon^*\right] = 1/2$ for some $\epsilon^* \neq 0$.
\end{proof}

\end{document}